\newtheorem{theorem}{Theorem}[section]
\newtheorem{proposition}[theorem]{Proposition}
\newtheorem{property}[theorem]{Property}
\newcommand{\R}{\mathcal{R}}
\newcommand{\C}{\mathcal{C}}
\newcommand{\X}{\mathcal{X}}
\newcommand{\D}{\mathbb{C}}
\newcommand{\pro}{\mathbb{P}}
\newcommand{\K}{\mathbb{K}}
\newcommand{\s}{s}
\newcommand{\x}{x}
\newcommand{\y}{\mathbf{y}}
\newcommand{\g}{\mathbf{g}}
\title{Shapley Uncertainty in Natural Language Generation}
\author{Meilin Zhu \\
  ISCAS  \\\And
  Gaojie Jin\thanks{Corresponding to g.jin@exeter.ac.uk.} \\
  University of Exeter 
  \\\And
  Xiaowei Huang \\
  University of Liverpool
  \\\And
  Lijun Zhang \\
  ISCAS }
\begin{document}
\maketitle
\begin{abstract}
In question-answering tasks, determining when to trust the outputs is crucial to the alignment of large language models (LLMs). 
\citet{kuhn2022semantic} introduces semantic entropy as a measure of uncertainty, by incorporating linguistic invariances from the same meaning. 
It primarily relies on setting threshold to measure the level of semantic equivalence relation.
We propose a more nuanced framework that extends beyond such thresholding by developing a Shapley-based uncertainty metric that captures the continuous nature of semantic relationships.
We establish three fundamental properties that characterize valid uncertainty metrics and prove that our Shapley uncertainty satisfies these criteria.
Through extensive experiments, we demonstrate that our Shapley uncertainty more accurately predicts LLM performance in question-answering and other datasets, compared to similar baseline measures.
\end{abstract}

\section{Introduction}

Although advancements have been made in natural language generation (NLG) tasks such as question answering and abstractive summarization~\citep{brown2020language,hoffmann2022training,chowdhery2023palm}, the understanding of uncertainty in foundation models remains limited. 
The absence of uncertainty metrics in transformer-based systems 
poses a threat to the reliability of their generated 
contents as an information source. 
Designing and validating reliable uncertainty measures is crucial for the development of safer AI systems, as highlighted by \citet{hendrycks2021unsolved}.

\citet{kuhn2022semantic} introduces semantic entropy, a measure that captures the constancy of meanings across linguistic variations. 
The core approach is a semantic equivalence relation, $\pro (\s_i\Rightarrow \s_j |\x)\ge t$ and $\pro (\s_j\Rightarrow \s_i |\x)\ge t$, where $t$ is a threshold, $\s_i$ and $\s_j$ are two output sentences given input $\x$, and $\s_j\Rightarrow \s_i$ represents $\s_i$ can be inferred from $\s_j$.
The above inequalities hold represents two sentences $\s_i$ and $\s_j$ that mean same thing, i.e., they belong to a same cluster $\C$.
Semantic entropy is designed to measure the average level of uncertainty inherent to the generated clusters.

The semantic entropy simplifies correlational information between sentences by reducing the continuous probability $\pro (\s_j\Rightarrow \s_i |\x)$ to a binary value, resulting in imprecise uncertainty estimation. 
This simplification may fail to capture that incorrect answers contribute varying degrees of uncertainty based on their semantic correlation with the correct answer. 
Note that, in this work, sentence correlation means the bidirectional relationship captured by probabilities $\pro (\s_i\Rightarrow \s_j |\x)$ and $\pro (\s_j\Rightarrow \s_i |\x)$.

\begin{table*}[t]
\centering
\vspace{-2mm}
\resizebox{\textwidth}{!}{
\setlength\tabcolsep{2pt}
\begin{tabular}{@{}lccccccc@{}}
\specialrule{.1em}{.075em}{.075em} 
Answer && Likelihood  & Semantic likelihood && \multicolumn{3}{c}{Shapley uncertainty} \\
$\quad\;\s$ && \( \pro(\s | \x) \) & \( \sum_{\s \in \C} \pro(\s | \x) \) && Wolfgang Amadeus Mozart & William Shakespeare & Ludwig van Beethoven \\
\cline{0-0} \cline{3-4} \cline{6-8} 
Wolfgang Amadeus Mozart  && 0.5 & \multirow{2}{*}{0.9} && 1 & 1 & 0.5 \\
Mozart      && 0.4 &  && 1 & 1 & 0.5 \\
Ludwig van Beethoven  && 0.1 & 0.1 && 0.5 & 0.5 & 1  \\
\cline{0-0} \cline{3-4} \cline{6-8} 
Uncertainty && 0.94 & 0.33 && \multicolumn{3}{c}{0.40} \\
\specialrule{.1em}{.075em}{.075em} 
\end{tabular}
}
\caption{Answers to the question ``Who wrote `Queen of the Night aria'?" 
We think language model understands the question but there is a factual error of ``Ludwig van Beethoven'' (a composer but not the right one). 
Our method takes correlation matrix and likelihood into account and computes the Shapley uncertainty value as 0.40.
}
\label{tab:1}
\vspace{3mm}

\centering
\vspace{-2mm}
\resizebox{\textwidth}{!}{
\setlength\tabcolsep{3 pt}
\begin{tabular}{@{}lccccccc@{}}
\specialrule{.1em}{.075em}{.075em} 
Answer && Likelihood  & Semantic likelihood && \multicolumn{3}{c}{Shapley uncertainty} \\
$\quad\;\s$ && \( \pro(\s | \x) \) & \( \sum_{\s \in \C} \pro(\s | \x) \) && Wolfgang Amadeus Mozart & Mozart & Leonardo da Vinci \\
\cline{0-0} \cline{3-4} \cline{6-8} 
Wolfgang Amadeus Mozart  && 0.5 & \multirow{2}{*}{0.9} && 1 & 1 & 0  \\
Mozart       && 0.4 &  && 1 & 1 & 0 \\
Leonardo da Vinci    && 0.1 & 0.1 && 0 & 0 & 1 \\
\cline{0-0} \cline{3-4} \cline{6-8} 
Uncertainty && 0.94 & 0.33 && \multicolumn{3}{c}{0.51} \\
\specialrule{.1em}{.075em}{.075em} 
\end{tabular}
}
\caption{Answers to the question ``Who wrote `Queen of the Night aria'?" 
We think language model may not understand the question as there is an irrelevant answer of ``Leonardo da Vinci'' (not a composer).
Thus the uncertainty of the answers in this group should be higher. 
In \Cref{tab:1}, the Shapley uncertainty is computed at 0.40, but it rises to 0.51 in this context. 
This increase is attributable to our method's assessment of ``Leonardo da Vinci'' as an irrelevant answer, in contrast to ``Ludwig van Beethoven'', which is deemed correlated. 
It is noteworthy that the uncertainties calculated by other methods remain consistent across both scenarios, which do not consider correlation.
}
\label{tab:2}
\end{table*}

To this end, we suggest employing the correlation matrix of output sentences for a more refined computation of NLG uncertainty. 
A key benefit of this method is its comprehensive utilization of all inter-sentence correlational data, moving beyond the simple clustering approach.
As shown in \Cref{tab:1,tab:2}, when addressing the question, ``Who wrote the `Queen of the Night aria'?'', semantic entropy simply categorizes ``Leonardo da Vinci'', ``Ludwig van Beethoven'', and ``Wolfgang Amadeus Mozart'' as distinct answers and calculates the entropy accordingly. 
In contrast, our method incorporates the intricate correlations among these responses.
For instance, ``Beethoven'' bears a closer relationship 
with the correct answer, ``Mozart'', than ``da Vinci'' does, because  both Beethoven and Mozart were composers, and da Vinci was not. 
Consequently, our approach considers answers involving ``Beethoven'' to be less uncertain than those mentioning ``da Vinci''. 
This nuanced treatment of answer relationships 
enables 
a more refined assessment of uncertainty in language model outputs.

To summarize, the contributions of this work are as follows:
\begin{itemize}
    \item In \Cref{sec:method}, we develop the Shapley uncertainty metric, which comprehensively incorporates correlational information among output sentences to quantify uncertainty in NLG.
    \item In \Cref{sec:experiments}, we demonstrate our method's superiority through comprehensive empirical studies across diverse tasks: open- and closed-book question-answering, and machine translation, utilizing the TriviaQA~\citep{joshi2017triviaqa}, CoQA~\citep{reddy2019coqa}, and WMT 2014~\citep{bojar2014findings} datasets. 
    Following the experimental framework of \citet{kuhn2022semantic}, we evaluate over 20 dataset-model combinations, validating our approach's generalizability across various question-answering tasks and LLM architectures, including DeepSeek, LLaMA and Qwen.
\end{itemize}

\section{Preliminaries}
\label{sec:pre}


The total uncertainty inherent in a prediction can be quantified by the predictive entropy of the output distribution, representing the amount of information known about the output conditional on the input. 
This entropy reaches its zenith when the output provides minimal information, especially assigning equal probabilities to all potential outcomes. 
More formally, for an input $\x$, the predictive entropy is defined as the entropy of the output (random variable) $\y$ given $\x$, i.e.,
\begin{equation}
\label{eq:1}
    h(\y|\x) = -\int \pro(y|\x) \ln \pro(y|\x) dy.
\end{equation}
We can recognize two distinct types of uncertainty: aleatoric uncertainty, related to inherent variability in the underlying data distribution, and epistemic uncertainty \citep{kendall2017uncertainties}, which emerges from missing knowledge.
Epistemic uncertainty is typically quantified via mutual information between model output distribution and its latent variables (e.g., model parameters or hidden representations), offers valuable insights but poses significant estimation challenges in large-scale models due to the requisite specialized techniques and computational demands. 
Rather than deriving epistemic uncertainty from model variance, it can alternatively be directly inferred using a secondary model~\citep{lahlou2022deup}. 
Following \citet{kuhn2022semantic}, in this work, we also eschew mutual information in favor of leveraging off-the-peg foundational models. 
Similarly, whereas methods such as \citet{malinin2020uncertainty} employs model ensembles to approximate the integral in \eqref{eq:1}, we opt for sampling from the output distribution of a single model. 
Prior methods, by simulating an ensemble's behavior within a singular model framework, provide a scalable approach to estimating model uncertainty, particularly pertinent to NLG given the extensive size of these models.

\begin{figure*}[t!]
\includegraphics[width=1.
\textwidth]{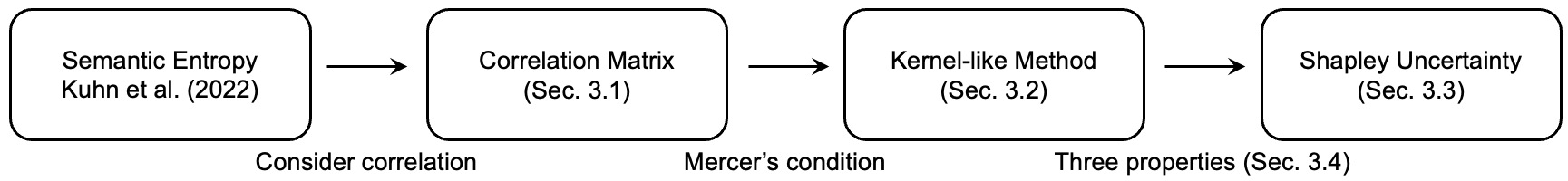}
\centering
\caption{
Technical insights, challenges, and solutions in obtaining the new Shapely uncertainty estimation method.
}
\label{fig:1}
\end{figure*}

\citet{kuhn2022semantic} introduces semantic entropy—an entropy which incorporates linguistic invariances created by shared meanings.
The core approach is a semantic equivalence relation, $\pro (\s_i\Rightarrow \s_j |\x)\ge t$ and $\pro (\s_j\Rightarrow \s_i |\x)\ge t$, where $t$ is a threshold, $\s_i$ and $\s_j$ are two output sentences given $\x$, and $\s_j\Rightarrow \s_i$ represents $\s_i$ can be inferred from $\s_j$.
The above inequalities hold represents two sentences $\s_i$ and $\s_j$ that mean same thing, i.e., they belong to a same cluster $\C$.
Then, \citet{kuhn2022semantic} defines the probability of $\C$ as
\begin{equation}
    \pro(\C|\x) = \sum_{\s\in \C} \pro (\s|\x),
\end{equation}
and the semantic entropy as
\begin{equation}
H(\C|\x)=-\sum_\C \pro(\C|\x) \ln \pro(\C|\x).
\end{equation}
They employ semantic entropy as a metric to assess the uncertainty of NLG outputs, thereby determining their reliability. 
Nonetheless, the methodology involves a rough clustering of different sentences and a simple approach to estimating semantic entropy. 
This results in a significant omission of correlational information among sentences, i.e., $\pro (\s_j\Rightarrow \s_i |\x)$ and $\pro (\s_i\Rightarrow \s_j |\x)$,  and consequently yields a potentially imprecise estimation of output uncertainty.


Recent advancements contrast traditional probabilistic methods by leveraging the language generation models themselves to estimate inherent uncertainty. 
For instance, \citet{lin2022teaching} has explored fine-tuning language models to articulate their confidence levels verbally. 
Concurrently, \citet{kadavath2022language} employs a strategy of sampling multiple generations and then responding to an NLG prompt to affirm the veracity of a suggested answer. 
Despite the promise shown by these techniques, they necessitate task-specific labels, additional training phases, and have demonstrated a propensity for unreliability when applied to out-of-distribution data.

\section{Shapley uncertainty}
\label{sec:method}

In this section, we introduce our novel NLG uncertainty estimation method, termed ``Shapley uncertainty''. 
This method is specifically designed to encapsulate a greater degree of correlational information among sentences, thereby enhancing the overall estimation of uncertainty.

As shown in \Cref{fig:1}, through a systematic steps, we develop the Shapley uncertainty metric which incorporates correlational information among sentences.
At first, following \citet{kuhn2022semantic}, we 
adopt a length-normalizing technique to ensure that the estimated uncertainty is independent of sentence length.
Some correlational information among sentences is deserted by the semantic entropy in \citet{kuhn2022semantic}, leading to a rough computation (i.e., underuse all information) of uncertainty.
In response to this limitation, our work introduces a method that incorporates the correlation matrix of NLG output. 
This approach is aimed at providing a more accurate depiction of NLG uncertainty.

\subsection{Correlation matrix}
\label{sec:m1}

The prior research of \citet{kuhn2022semantic} has introduced the semantic entropy, which clusters sentences based on the conditions $\pro (\s_i\Rightarrow \s_j |\x)\ge t$ and $\pro (\s_j\Rightarrow \s_i |\x)\ge t$ \citep{he2020deberta}. 
This method contributes to the enhancement of NLG uncertainty estimation by incorporating the semantic significance of different sentences. 
However, the usage of a threshold $t$ for clustering may result in the loss of valuable information, consequently leading to less accurate uncertainty estimations.



To address this issue, we introduce a placeholder semantic equivalence relation, $\D(\cdot,\cdot)$, which measures the ``correlation'' of any two sentences that mean the same thing,
$\D(\cdot,\cdot)$ is defined as follows:
\begin{equation}
\label{eq:generatecorrelationmatrix}
\begin{aligned}
\D(\s_i,\s_j|\x) &:= \frac{1}{2}\pro (\s_i\Rightarrow \s_j |\x)+\frac{1}{2}\pro (\s_j\Rightarrow \s_i |\x).
\end{aligned}
\end{equation}
Given the input $\x$, for $n$ output sentences $\s_1,...,\s_n$, we can generate a ``correlation matrix'' as follows
\begin{small}
\begin{equation}\nonumber
\begin{bmatrix}
\D(\s_1,\s_1)  & ...  & \D(\s_1,\s_{n}) \\
... & ... & ...  \\
\D(\s_{n},\s_1) & ... &  \D(\s_{n},\s_{n}) \\
\end{bmatrix}.
\end{equation}
\end{small}However, even with limited sentences, $\D(\cdot,\cdot)$ may not satisfy the Mercer's condition, i.e., the above ``correlation matrix" may not be positive semi-definite. 
This is mainly because $\pro (\s_i\Rightarrow \s_j |\x)$ is generated by a DNN-based method proposed by \citet{he2020deberta}. 
Notably, this method lacks an inherent mechanism to ensure that the outputs conform to Mercer's condition.

In statistics, a positive semi-definite correlation matrix indicates that the estimated correlations represent plausible relationships between sentences. 
However, if a matrix is not positive semi-definite, it might imply that the calculated correlations are not feasible, which can lead to issues in analyses that rely on correlation or covariance structures.
Thus, in the next subsection, we design a variant of the kernel function to make sure the generated correlation matrix is positive semi-definite.

\subsection{A variant of the kernel function}
\label{sec:m2}

The DNN-based method to calculate $\pro (\s_i\Rightarrow \s_j |\x)$ may not generate a normative correlation metric between sentences, i.e., the correlation operation $\D(\cdot,\cdot)$ based on $\pro (\s_i\Rightarrow \s_j |\x)$ may not satisfy the Mercer’s condition.
To address this issue, we design a variant of the kernel function as follows,
\begin{equation}
\label{eq:kernel}
\K(\D(\s_i,\s_j))=
\begin{cases}
    1, & \text{if $i=j$}\\
    \beta\cdot\kappa(1-\D(\s_i,\s_j)), & \text{otherwise}
\end{cases}
\end{equation}
where $\beta\in (0,1]$ is a hyper-parameter and $\kappa(\cdot)$ is a variant of kernel function, e.g., for a standard Gaussian kernel, we let $$\kappa(1-\D(\s_i,\s_j))=e^{-\frac{(1-\D(\s_i,\s_j))^2}{2}}.$$ 
In the experiments, we adopt the above Gaussian kernel and set $\beta=0.5$ as default, which is chosen through ablation studies given in \Cref{app:hyparam}.

The method in \eqref{eq:kernel} brings two benefits: Firstly, it introduces a non-linear kernel function to fix the correlation generated by DNN-based $\D(\cdot,\cdot)$,
enabling the algorithm to learn complex patterns between sentences and output a practicable metric to reflect the uncertainty.
Secondly, for finite sentences, the correlation matrix generated by \eqref{eq:kernel} can be proved to be positive semi-definite.

\begin{proposition}
Given a finite $n\in \mathbb{Z}^{+}$, there must exist $\beta \in \mathbb{R}^+$ such that the correlation matrix generated by $\K(\D(\cdot,\cdot))$ is positive semi-definite.
\end{proposition}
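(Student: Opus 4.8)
The plan is to view the correlation matrix $M$ produced by \eqref{eq:kernel} as a perturbation of the identity, $M = I + \beta B$, where $I$ is the $n\times n$ identity (accounting for the unit diagonal) and $B$ is the hollow matrix with $B_{ii}=0$ and $B_{ij}=\kappa(1-\D(\s_i,\s_j))$ for $i\neq j$. Since \eqref{eq:generatecorrelationmatrix} makes $\D(\s_i,\s_j)=\D(\s_j,\s_i)$, both $B$ and $M$ are symmetric, so all eigenvalues of $M$ are real and proving positive semi-definiteness reduces to showing that every eigenvalue is nonnegative. The case $n=1$ is trivial ($M=[1]$), so I would assume $n\ge 2$ throughout.

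The second step is to record that the off-diagonal entries are uniformly bounded. Because $\D(\s_i,\s_j)$ is an average of two probabilities, it lies in $[0,1]$, so $1-\D(\s_i,\s_j)\in[0,1]$ and the Gaussian kernel value $\kappa(1-\D(\s_i,\s_j))=e^{-(1-\D(\s_i,\s_j))^2/2}$ lies in $(0,1]$. Writing $K_{\max}=\max_{i\neq j}\kappa(1-\D(\s_i,\s_j))\le 1$, every off-diagonal entry of $M$ has absolute value at most $\beta K_{\max}$, and each row contains at most $n-1$ such entries.

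The core step is an eigenvalue lower bound via the Gershgorin circle theorem (equivalently, diagonal dominance). For any eigenvalue $\lambda$ of $M$ there is a row index $i$ with
\[
|\lambda-1| \le \sum_{j\neq i}\beta\,\kappa(1-\D(\s_i,\s_j)) \le \beta(n-1)K_{\max},
\]
whence $\lambda\ge 1-\beta(n-1)K_{\max}$. Since $(n-1)K_{\max}$ is a fixed positive constant, choosing any $\beta$ with $0<\beta\le \tfrac{1}{(n-1)K_{\max}}$ forces $\lambda\ge 0$ for every eigenvalue, so $M$ is positive semi-definite; in particular $\beta=\min\{1,\tfrac{1}{(n-1)K_{\max}}\}$ keeps the hyper-parameter inside the range $(0,1]$ used in the experiments.

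I do not expect a genuine obstacle here: the entire argument rests only on symmetry of $\D(\cdot,\cdot)$ and boundedness of $\kappa$, so the same reasoning certifies positive semi-definiteness for any bounded kernel variant, not merely the Gaussian one. The only points requiring care are confirming symmetry (so that the spectral characterization of positive semi-definiteness applies) and making explicit that $\kappa$ is bounded on $[0,1]$; once these are in place, Gershgorin's theorem does the rest, and the finiteness of $n$ is exactly what guarantees that $\beta(n-1)K_{\max}$ is finite and hence that a suitable positive $\beta$ exists.
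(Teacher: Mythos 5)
Your proof is correct and takes essentially the same route as the paper's: both decompose the matrix as the identity plus $\beta$ times its off-diagonal (hollow) part, use symmetry to restrict to real eigenvalues, and invoke the Gershgorin circle theorem to lower-bound them, concluding that any sufficiently small $\beta>0$ works. The only difference is cosmetic --- you apply Gershgorin directly to the $\beta$-scaled matrix and get the slightly sharper threshold $\beta \le \tfrac{1}{(n-1)K_{\max}}$, whereas the paper applies it to the $\beta=1$ matrix and transfers eigenvalues via $\lambda' = 1+\beta(\lambda-1)$, arriving at the more conservative $\beta \le \tfrac{1}{n+1}$.
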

\begin{proof}
    See \Cref{app:a}. 
\end{proof}

\subsection{Shapley uncertainty}
\label{sec:m4}

Given a collection of $n$ sentences $\s_1,...,\s_n$, we compute their correlation matrix $\R\in\mathbb{R}^{n\times n}$ using \eqref{eq:kernel}, where $\R_{ij}=\K(\D(\s_i,\s_j))$. 
Suppose there are $n$ random variables ${\bf s}_1,...,{\bf s}_n$ corresponding to $\s_1,...,\s_n$, let $\tilde {\bf s}=[{\bf s}_1,...,{\bf s}_n]$ denote the multivariate random variable characterized by the correlation  matrix $\R$. 
The overall uncertainty of $\tilde {\bf s}$ can then be quantified through its differential entropy\footnote{We adopt a commonly used setting: suppose $\tilde {\bf s}$ is a multivariate Gaussian with correlation (covariance) matrix $\R$ and compute its entropy.}:
\begin{equation}
\label{eq:dfentropy}
    h(\tilde {\bf s}) = -\int \pro(\tilde s) \ln \pro(\tilde s) d\tilde s.
\end{equation}
\noindent
\emph{Remark. Having constructed the correlation matrix $\R$ for sentences $\s_1,...,\s_n$, we map these sentences to an $n$-dimensional multivariate random variable $\tilde {\bf s}$ characterized by $\R$. 
This mapping enables us to harness the complete correlational information encoded in $\R$ for computing the uncertainty of the sentence set through entropy measures.   
}



The effectiveness of differential entropy $h(\tilde {\bf s})$ as an uncertainty metric is limited in high-dimensional spaces. This limitation arises because a small subset of highly correlated dimensions can drive the differential entropy to negative infinity, masking the uncertainty contributions from other dimensions. 
For example, the differential entropy of a multivariate Gaussian mainly depends on its logarithmic determinant of covariance matrix, when any two dimensions become highly correlated, the determinant approaches zero, forcing the differential entropy toward negative infinity regardless of the behavior in other dimensions.

To address this, we propose an alternative uncertainty metric --- \textbf{Shapley uncertainty} $\phi(\tilde {\bf s})$ --- that decomposes into elementary uncertainties $\phi({\bf s}_i|\tilde {\bf s})$ for each dimension.
Leveraging Shapley method, we define the elementary uncertainty of ${\bf s}_i$ with respect to $\tilde {\bf s}$ as
\begin{equation}
\begin{aligned}
\label{eq:marginal uncertainty}
\phi({\bf s}_i|\tilde {\bf s}) & = \!\! \sum_{\X\subseteq\{1,...,n\}\backslash {i}} \!\!\! \frac{|\X|!(n\!-\!|\X|\!-\!1)!}{n!}\times \\
&\quad\Big(h\big([{\bf s}_j]_{j\in \X}\cup {\bf s}_i\big)-h\big(([{\bf s}_j]_{j\in \X}\big)\Big),
\end{aligned}
\end{equation}
and the whole uncertainty of $\tilde {\bf s}$ is defined as
\begin{equation}
\label{eq:shapley uncertainty}
    \phi(\tilde {\bf s})= 
    \sum_{\s_i\in \tilde {\bf s}} \phi({\bf s}_i|\tilde {\bf s}).
\end{equation}
\emph{Remark. We quantify individual uncertainty contributions through the Shapley value method, where the total uncertainty is the sum of these elementary uncertainties. 
In \eqref{eq:marginal uncertainty}, $|\X|$ denotes the set cardinality. 
The differential entropy $h(\cdot)$ is computed using the determinant of the correlation matrix, following the standard Gaussian setting outlined in the previous footnote.
}

\subsection{Three uncertainty metric properties}
\label{sec:m3}

We establish three essential properties that an eligible uncertainty metric must satisfy.


The first desirable property is the minimal uncertainty. 
Consider $\tilde {\bf s}=[{\bf s}_1,...,{\bf s}_n]$ and let $\g$ be a random variable corresponding to the sentence that shares the same semantic meaning of the element contributing minimal uncertainty to $\phi(\tilde {\bf s})$.
That is, $\g$ is totally correlated with $\arg\min_{{\bf s}_i}\phi({\bf s}_i|\tilde {\bf s})$.
Then, for all ${\bf s}_i\in \tilde {\bf s}$, the whole uncertainty of $\tilde {\bf s}\backslash {\bf s}_i \cup {\g}$ (${\bf s}_i$ is replaced by $\g$) is less than or equals to that of the original $\tilde {\bf s}$.
We formally define the minimal uncertainty property in the following.

\begin{property}[\bf Minimal uncertainty]
\label{property1}
Consider $\tilde {\bf s}=[{\bf s}_1,...,{\bf s}_n]$ and let $\g$ is  totally correlated with the element contributing minimal uncertainty to $\phi(\tilde {\bf s})$, formally expressed as:
$$\rho(\g, \underset{{\bf s}_j\in \tilde {\bf s}}{\arg\min}\phi({\bf s}_j|\tilde {\bf s}))=1,$$
where $\rho(\cdot,\cdot)$ represents the correlation.
Then, for any ${\bf s}_i\in \tilde {\bf s}$, replacing ${\bf s}_i$ with $\g$ should not increase the total uncertainty:
$$\phi(\tilde {\bf s}\backslash{\bf s}_i \cup {\g}) \le \phi(\tilde {\bf s}).$$
Moreover, if ${\bf s}_i \ne \arg\min_{{\bf s}_j}\phi({\bf s}_j|\tilde {\bf s})$, the inequality becomes strict:
$$\phi(\tilde {\bf s}\backslash{\bf s}_i \cup {\g}) < \phi(\tilde {\bf s}).$$
\end{property}

\begin{algorithm*}[t]
\caption{Shapley uncertainty}
\label{alg}
\hspace*{0.1in}\textbf{Input:} LLM, input $\x$, number of samples $n$. \\
\hspace*{0.1in}\textbf{Output:} Shapley uncertainty. \\
\hspace*{0.1in}1: Initialize a multiset of answers $\mathcal{O}\gets \emptyset$ \\
\hspace*{0.1in}2: \textbf{for} $i=1$ to $n$ \textbf{do} \hspace*{3.8in} $\triangleright$ Sample $n$ answers \\
\hspace*{0.1in}3:\hspace*{0.2in} Add $s_i=\text{LLM}(\x)$ to $\mathcal{O}$ \\
\hspace*{0.1in}4: \textbf{end for} \\
\hspace*{0.1in}5: \textbf{for} $i=1$ to $n-1$ \textbf{do} \hspace*{1.8in} $\triangleright$ Compute probability through \citet{he2020deberta} \\
\hspace*{0.1in}6:\hspace*{0.2in} \textbf{for} $j=i+1$ to $n$ \textbf{do} \\
\hspace*{0.1in}7:\hspace*{0.4in} Compute $\pro (\s_j\Rightarrow \s_i |\x)$ and $\pro (\s_i\Rightarrow \s_j |\x)$, $\s_i\in\mathcal{O}$, $\s_j\in\mathcal{O}$ \\
\hspace*{0.1in}8:\hspace*{0.2in} \textbf{end for} \\
\hspace*{0.1in}9: \textbf{end for} \\
\hspace*{0.1in}10: Compute all $\mathbb{C}(s_i,s_j|x)$ through $\pro (\s_i\Rightarrow \s_j |\x)$ and $\pro (\s_j\Rightarrow \s_i |\x)$ \hspace*{1.3in} $\triangleright$ As in \eqref{eq:generatecorrelationmatrix}\\
\hspace*{0.1in}11: Generate the correlation matrix $\R$, where $\R_{ij}=\K(\D(\s_i,\s_j))$  \hspace*{1.5in} $\triangleright$ As in \eqref{eq:kernel}\\
\hspace*{0.1in}12: Compute Shapley uncertainty \hspace*{3.in} $\triangleright$ As in \eqref{eq:marginal uncertainty} and \eqref{eq:shapley uncertainty} 
\end{algorithm*}

\noindent
\emph{Remark. 
The minimal uncertainty property reflects a fundamental intuition about uncertainty measurement: if we replace any sentence in the output set with a semantic duplicate of the least uncertain sentence, the total uncertainty should not increase. 
This aligns with the natural expectation that introducing redundant information cannot amplify the overall uncertainty of the system.
}

The second desirable property is the maximal uncertainty. 
Let $\g$ be a random variable corresponding to the sentence whose semantic meaning is entirely distinct from all others.
Then, for all ${\bf s}_i\in \tilde {\bf s}$, the whole uncertainty of $\tilde {\bf s}\backslash{\bf s}_i \cup {\g}$ is larger than or equals to that of the original $\tilde {\bf s}$.
We formalize this maximal uncertainty property in the definition below.

\begin{property}[\bf Maximal uncertainty]
\label{property2}
Let $\g$ correspond to a totally new sentence, i.e., $\g$ satisfies 
$$\forall {\bf s}_i\in \tilde {\bf s}, \quad \rho({\bf s}_i,\g)=0.$$ 
Then, replacing any variable ${\bf s}_i\in \tilde {\bf s}$ with $\g$ should not decrease the total uncertainty:
$$\phi(\tilde {\bf s}\backslash{\bf s}_i \cup {\g}) \geq \phi(\tilde {\bf s}).$$
Moreover, if ${\bf s}_i$ exhibits any correlation with others in $\tilde {\bf s}$, the inequality becomes strict:
$$\phi(\tilde {\bf s}\backslash{\bf s}_i \cup {\g}) > \phi(\tilde {\bf s}).$$
\end{property}

\noindent
\emph{Remark.
The maximal uncertainty property captures a complementary intuition: replacing any existing sentence with one that is semantically independent of the output set should not decrease total uncertainty. 
This reflects the natural principle that introducing novel, unrelated information cannot reduce the overall uncertainty of the system.
}

The third property is consistency. 
Let $\g$ correspond to the sentence that contributes higher uncertainty than ${\bf s}_j\in \tilde {\bf s}$ when combined with any subset of remaining ${\bf s}_i\in \tilde {\bf s}$, $i\ne j$. 
Then replacing ${\bf s}_j$ with $\g$ must increase the total uncertainty.

\begin{property}[\bf Consistency]
\label{property3}
Let $\g$ be a variable that may be correlated with elements in $\tilde {\bf s}$. 
For some ${\bf s}_j \in \tilde {\bf s}$, if the following inequality holds for all non-empty subsets $\X\subseteq\{1,...,n\}\backslash {j}$ with corresponding sequence ${\bf x}=[{\bf s}_i]_{i\in \X}$:
$$
\phi(\g|{\bf x}\cup {\g})>\phi({\bf s}_j|{\bf x}\cup{{\bf s}_j}),
$$
then the uncertainty metric must satisfy:
$$
\phi(\tilde {\bf s}\backslash {{\bf s}_j} \cup {\g})>\phi(\tilde {\bf s}).
$$
\end{property}

\noindent
\emph{Remark.
The consistency property establishes that if replacing a sentence increases its marginal uncertainty with respect to all possible combinations of other sentences, then the total uncertainty of the entire set must also increase. This ensures monotonic behavior in the uncertainty metric across local and global scales.
}

Since the differential entropy of a multivariate random variable depends on the determinant of its covariance matrix, using differential entropy as an uncertainty measure may not satisfy the the above properties.

\begin{proposition}
The metric of differential entropy over the whole multivariate distribution, as defined in \eqref{eq:dfentropy}, does not satisfy Properties \ref{property1} and \ref{property2}.
\end{proposition}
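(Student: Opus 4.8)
The plan is to reduce the metric to a concrete form and then refute \Cref{property1} and \Cref{property2} by counterexample. Under the Gaussian model of the footnote, the metric in \eqref{eq:dfentropy} is $h(\tilde {\bf s})=\tfrac12\ln\big((2\pi e)^{n}\det\R\big)$, an affine function of $\ln\det\R$. First I would record that on strictly positive-definite $\R$ the two properties actually point the ``right'' way: replacing ${\bf s}_i$ by an independent $\g$ multiplies $\det\R$ by the conditional variance of ${\bf s}_i$ given the remaining coordinates, which lies in $(0,1]$ (the Schur-complement/Fischer inequality), matching the maximal-uncertainty direction, while replacing ${\bf s}_i$ by a near-duplicate of the most-correlated coordinate can only shrink $\det\R$. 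Consequently the violation cannot live in the non-degenerate interior; it must be sought at the boundary of the PSD cone, which is precisely the ``masking'' effect noted before the proposition: once two coordinates are perfectly correlated, $\det\R\to0$ and $h\to-\infty$ irrespective of the other coordinates.

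Crucially, this degeneracy is not pathological here, since two outputs that mean the same thing force their mutual correlation to $1$, so the naive correlation matrix is routinely singular. I would exploit this with $n=3$, taking ${\bf s}_1$ and ${\bf s}_2$ perfectly correlated while ${\bf s}_3$ retains a genuine nonzero correlation with them, so that $\det\R=0$ and $h(\tilde {\bf s})=-\infty$. For \Cref{property2} I replace ${\bf s}_3$ by a fresh $\g$ with $\rho({\bf s}_k,\g)=0$; since ${\bf s}_3$ is correlated with the others the property demands a \emph{strict} increase, yet the pair $\{{\bf s}_1,{\bf s}_2\}$ survives, the new matrix is still singular, and $h$ remains $-\infty$, so $\phi(\tilde {\bf s}\backslash{\bf s}_3\cup{\g})=-\infty=\phi(\tilde {\bf s})$, violating the demanded strict increase. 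For \Cref{property1} the minimal-uncertainty coordinate $\arg\min_{{\bf s}_j}\phi({\bf s}_j|\tilde {\bf s})$ is one of the maximally correlated coordinates, say ${\bf s}_1$ (each is predictable from the other, hence contributes least); taking $\g$ totally correlated with ${\bf s}_1$ and replacing the non-minimal ${\bf s}_3$ again preserves the perfectly correlated pair, so $h=-\infty$ persists and the required strict decrease $\phi(\tilde {\bf s}\backslash{\bf s}_3\cup{\g})<\phi(\tilde {\bf s})$ fails.

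I expect the main obstacle to be making the $-\infty$ reasoning rigorous rather than a formal manipulation of infinities, since at exact singularity the Shapley scores $\phi({\bf s}_j|\tilde {\bf s})$ that define the minimal element also diverge. The safe route is to pass through a near-singular family with $\rho({\bf s}_1,{\bf s}_2)=1-\epsilon$: for $\epsilon>0$ every quantity is finite and the minimal coordinate is well defined, and $\det\R$, $\det\R'$ are both $O(\epsilon)$ whose ratio controls $\phi(\tilde {\bf s}\backslash{\bf s}_i\cup{\g})-\phi(\tilde {\bf s})$. The delicate point is that Fischer's inequality still fixes the ``correct'' sign for each $\epsilon>0$, so the failure is genuinely a boundary phenomenon; I would therefore argue directly at the singular matrix that $h$ is driven to $-\infty$ and hence the finite strict inequalities of \Cref{property1} and \Cref{property2} cannot be met. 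A final bookkeeping step is to confirm each constructed matrix is a valid correlation matrix and that ${\bf s}_3$ genuinely correlates with the others, so that it is indeed the strict form of each property that is being invoked.
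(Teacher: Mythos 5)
Your proposal is correct and takes essentially the same route as the paper: the paper's entire justification is precisely your degenerate-pair example, namely that once some pair satisfies $\rho({\bf s}_i,{\bf s}_j)=1$ the correlation determinant vanishes, $h$ is pinned at $-\infty$ regardless of any replacement that preserves such a pair, and so the strict inequalities demanded by Properties~\ref{property1} and~\ref{property2} cannot hold. Your additional observations (that the violation is confined to the boundary of the PSD cone, and the $\epsilon$-regularization making the $\arg\min$ well defined) are rigor the paper omits, but they elaborate the same argument rather than replace it.
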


A straightforward example illustrates the above proposition: assuming the existence of ${\bf s}_i, {\bf s}_j \in \tilde {\bf s}$ with $\rho({\bf s}_i, {\bf s}_j)=1$, it is clear that the metric of differential entropy does not satisfy Properties \ref{property1} and \ref{property2} in this scenario.




\begin{proposition}
The metric of Shapley uncertainty, as defined in \eqref{eq:marginal uncertainty} and \eqref{eq:shapley uncertainty}, satisfies Properties~\ref{property1}, \ref{property2}, and \ref{property3}.
\end{proposition}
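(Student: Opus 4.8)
The plan is to verify each of the three properties separately, relying on the key structural fact that Shapley uncertainty is built from differential entropies of multivariate Gaussians, so that each term $h([{\bf s}_j]_{j\in\X})$ equals (up to an additive constant) $\tfrac{1}{2}\ln\det\R_\X$, where $\R_\X$ is the principal submatrix of $\R$ indexed by $\X$. The Shapley value construction in \eqref{eq:marginal uncertainty} guarantees, by the standard efficiency axiom, that $\phi(\tilde{\bf s})=\sum_i\phi({\bf s}_i|\tilde{\bf s})$ telescopes to exactly the total differential entropy $h(\tilde{\bf s})$. This observation is the workhorse: it means I can analyze the global quantity $\phi(\tilde{\bf s})=h(\tilde{\bf s})=\tfrac{1}{2}\ln\det\R$ directly for the monotonicity statements, while using the marginal-contribution form for the consistency property.

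For \Cref{property1} (minimal uncertainty), I would argue that replacing ${\bf s}_i$ by a duplicate $\g$ of the least-uncertain element makes the corresponding row and column of $\R$ coincide with those of $\arg\min_{{\bf s}_j}\phi({\bf s}_j|\tilde{\bf s})$, introducing near-linear dependence that can only shrink $\det\R$. Formally I would compare $\det\R$ before and after the swap: since Shapley elementary uncertainties are symmetric, a totally correlated pair lowers the log-determinant, giving $\phi(\tilde{\bf s}\backslash{\bf s}_i\cup\g)\le\phi(\tilde{\bf s})$, with strictness whenever ${\bf s}_i$ is not already the minimizer (so that genuine information is discarded rather than preserved). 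For \Cref{property2} (maximal uncertainty), the replacement $\g$ is uncorrelated with every remaining coordinate, so the new correlation matrix is block-diagonal with a $1\times1$ block; by the Fischer/Hadamard-type inequality for principal submatrices, detaching ${\bf s}_i$'s correlations and inserting an independent direction cannot decrease $\det\R$, and strictly increases it whenever ${\bf s}_i$ had nonzero correlation with some other coordinate. Both of these reduce to well-known determinantal monotonicity facts once the Gaussian reduction is in place.

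\Cref{property3} (consistency) is the one I expect to be the main obstacle, because its hypothesis is stated at the level of marginal contributions over \emph{every} nonempty subset $\X$, whereas the conclusion concerns the global sum. Here I cannot simply invoke the efficiency identity; instead I would write the difference $\phi(\tilde{\bf s}\backslash{\bf s}_j\cup\g)-\phi(\tilde{\bf s})$ and attempt to express it as a nonnegative combination of the per-subset gaps $\phi(\g|{\bf x}\cup\g)-\phi({\bf s}_j|{\bf x}\cup{\bf s}_j)$. The natural route is to fix a decomposition in which the marginal contribution of the swapped coordinate, averaged over all orderings with the Shapley weights $\tfrac{|\X|!(n-|\X|-1)!}{n!}$, is isolated, and then show that replacing ${\bf s}_j$ by $\g$ changes only these marginal terms (the contributions of the untouched coordinates ${\bf s}_i$, $i\neq j$, canceling or dominating appropriately). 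The delicate point is that swapping ${\bf s}_j$ for $\g$ also perturbs the marginal contributions of the \emph{other} coordinates, so a clean term-by-term comparison is not immediate; I would handle this by exploiting the symmetry of the Shapley weights and the fact that each coordinate's total contribution can be reorganized as an average of conditional entropy increments, reducing the global inequality to the hypothesized subset-wise inequalities.

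Throughout, I would keep the argument at the level of the Gaussian-entropy reduction and determinantal inequalities, deferring the numerical constants to the appendix; the crux in all three cases is translating a statement about $\phi$ into a statement about principal-minor behavior of $\R$, after which \Cref{property1} and \Cref{property2} follow from standard determinant monotonicity and \Cref{property3} follows from a Shapley-weighted summation of the hypothesized marginal inequalities.
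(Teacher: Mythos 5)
Your route for Properties \ref{property1} and \ref{property2} is genuinely different from the paper's. The paper (Appendix A) argues element-by-element: for each property it asserts that (i) the swapped-in variable $\g$ contributes less (resp.\ more) elementary uncertainty than the variable it replaces, and (ii) every untouched variable's elementary uncertainty moves weakly in the same direction, and then sums over elements; it never invokes efficiency or determinants, and it never actually proves (i) or (ii). Your reduction---Shapley efficiency collapses $\phi(\tilde{\bf s})=\sum_i\phi({\bf s}_i|\tilde{\bf s})$ to $h(\tilde{\bf s})=\tfrac12\ln\det\R+\mathrm{const}$, after which a duplicated row forces $\det\R\to 0$ and Fischer's inequality handles the independent replacement---is correct as stated and is more rigorous than the paper's assertions. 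However, your key identity cuts deeper than you acknowledge: it says the Shapley uncertainty of \eqref{eq:shapley uncertainty} is \emph{identically} the differential entropy of \eqref{eq:dfentropy}. This collides with the paper's Proposition 3.5, which claims differential entropy violates Properties \ref{property1} and \ref{property2}; a quantity equal to $h$ cannot simultaneously satisfy and fail the same properties. In particular your proofs inherit exactly the degeneracy the paper holds against $h$: if $\tilde{\bf s}$ already contains a perfectly correlated pair, old and new values are both $-\infty$ and the strict inequalities fail.

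For Property \ref{property3} you correctly isolate the crux---the swap also perturbs the marginal contributions of the untouched coordinates---but the nonnegative reorganization you hope for does not exist, because the implication is false. By efficiency, the conclusion is equivalent to the single comparison $h(\g\mid[{\bf s}_i]_{i\neq j})>h({\bf s}_j\mid[{\bf s}_i]_{i\neq j})$ conditional on \emph{all} remaining variables, whereas the hypothesis only orders Shapley-weighted \emph{averages} of conditional entropies over sub-coalitions; ordered averages do not order the top term. Explicit counterexample with $n=3$, $j=3$: take $\rho({\bf s}_1,{\bf s}_2)=0.8$, $\rho({\bf s}_3,{\bf s}_1)=\rho({\bf s}_3,{\bf s}_2)=0.6$, $\rho(\g,{\bf s}_1)=\sqrt{0.15}$, $\rho(\g,{\bf s}_2)=0$. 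All three hypothesis inequalities hold: the gaps for $\X=\{1\}$ and $\X=\{2\}$ are $\tfrac14\ln\tfrac{0.85}{0.64}>0$ and $\tfrac14\ln\tfrac{1}{0.64}>0$, and the gap for $\X=\{1,2\}$ is $\tfrac{1}{12}\ln\tfrac{0.85}{0.64}+\tfrac{1}{12}\ln\tfrac{1}{0.64}+\tfrac16\ln\tfrac{0.21}{0.216}\approx 0.056>0$. Yet the new correlation matrix has determinant $0.21<0.216$, so $\phi(\tilde{\bf s}\backslash{\bf s}_3\cup\g)<\phi(\tilde{\bf s})$ and the conclusion fails.

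The same example also refutes the unproven step in the paper's own proof of part (3): there the authors assert $\phi({\bf s}_i|\tilde{\bf s}\backslash{\bf s}_j\cup\g)\ge\phi({\bf s}_i|\tilde{\bf s})$ for every untouched $i$, but here $\phi({\bf s}_1|\cdot)$ \emph{drops} by about $0.055$ after the swap. So the obstacle you flagged is a genuine gap, and it cannot be closed by either your outline or the paper's argument; what your efficiency observation really shows is that the proposition, as stated, is incompatible with the paper's surrounding claims.
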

\begin{proof}
    See Appendix A.
\end{proof}

The central inspiration behind Shapley uncertainty is to tackle the semantic correlation intrinsic to natural language by converting the problem of uncertainty estimation into a correlated meaning-space.

As shown in \Cref{alg}, we provide the process of computing Shapley uncertainty in question-answering problems.
In the next section, following the setting of \citet{kuhn2022semantic} and \citet{liu2024uncertainty}, we empirically demonstrate the effectiveness of our metric to estimate the uncertainty of NLG outputs.

\begin{figure*}[t!]
\includegraphics[width=1
\textwidth]{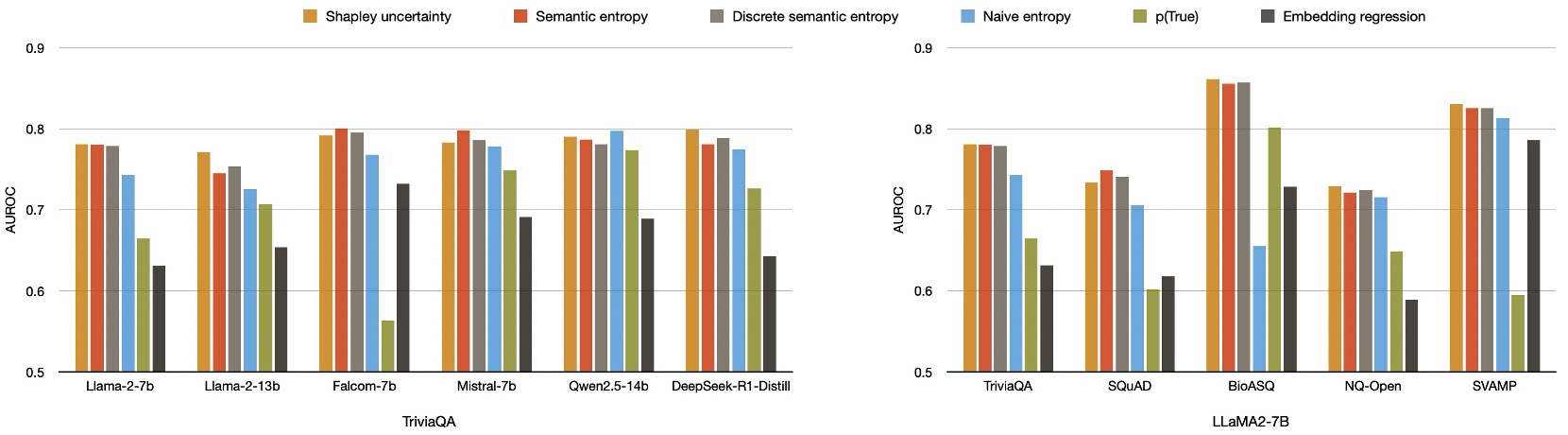}
\centering
\caption{
Cross-domain and LLM architectural generalization of our approach. Our Shapley uncertainty (blue) also surpassed baseline methods on TriviaQA (\textbf{Left}) and LLaMA2-7B (\textbf{Right}). 
}
\label{fig:2.5}
\vspace{-3mm}
\end{figure*}

\section{Empirical results}
\label{sec:experiments}

In this section, we empirically demonstrate the utility of Shapley uncertainty as an effective quantifier for uncertainty in NLG on free-form question-answering and machine translation tasks. 
Reliable uncertainty measures are essential as they inform us about the trustworthiness of model outputs, and an ideal measure should provide insight into the likelihood of a prediction being correct.

Prior studies, like \citet{filos2019benchmarking,kuhn2022semantic}, have approached uncertainty estimation by determining whether to trust a model's output for a given context. 
The receiver operator characteristic curve—AUROC—is a robust measure in this scenario, representing the probability that a randomly chosen correct answer is more uncertain than a randomly selected incorrect answer, with perfect uncertainty scoring 1 and a random (worst) measure scoring 0.5.




For \textbf{datasets}, CoQA, TriviaQA and WMT-14~\citep{bojar2014findings} are chosen to cover both open-book conversational, closed-book question-answer problems, and machine translation problems. 
We utilize a subset of 8000 questions from each above, aligning with the size of CoQA's training split. 
We also consider SOTA question-answering datasets like BioASQ~\citep{bioasq}, SQuAD~\citep{rajpurkar-etal-2016-squad}, SVAMP~\citep{patel-etal-2021-nlp} and NQ-Open~\citep{lee-etal-2019-latent} to evaluate cross-domain generalization performance of our approach.
Our evaluation metric of choice is a fuzzy matching criterion: $RougeL(\s,\s_{\text{true}})>0.3$, implying that an answer is considered correct if its Rouge-L \citep{lin2004automatic} similarity to the reference answer exceeds 0.3.
We employ the BLEU score \citep{papineni2002bleu,lin2004orange} as our scoring function $BLEU(\cdot, \cdot)$. 
Generated translations $\s$ are classified as correct if $BLEU(\s, \s_{\text{true}})$ exceeds 0.3, and incorrect otherwise.

\begin{figure}[t!]
\includegraphics[width=0.48
\textwidth]{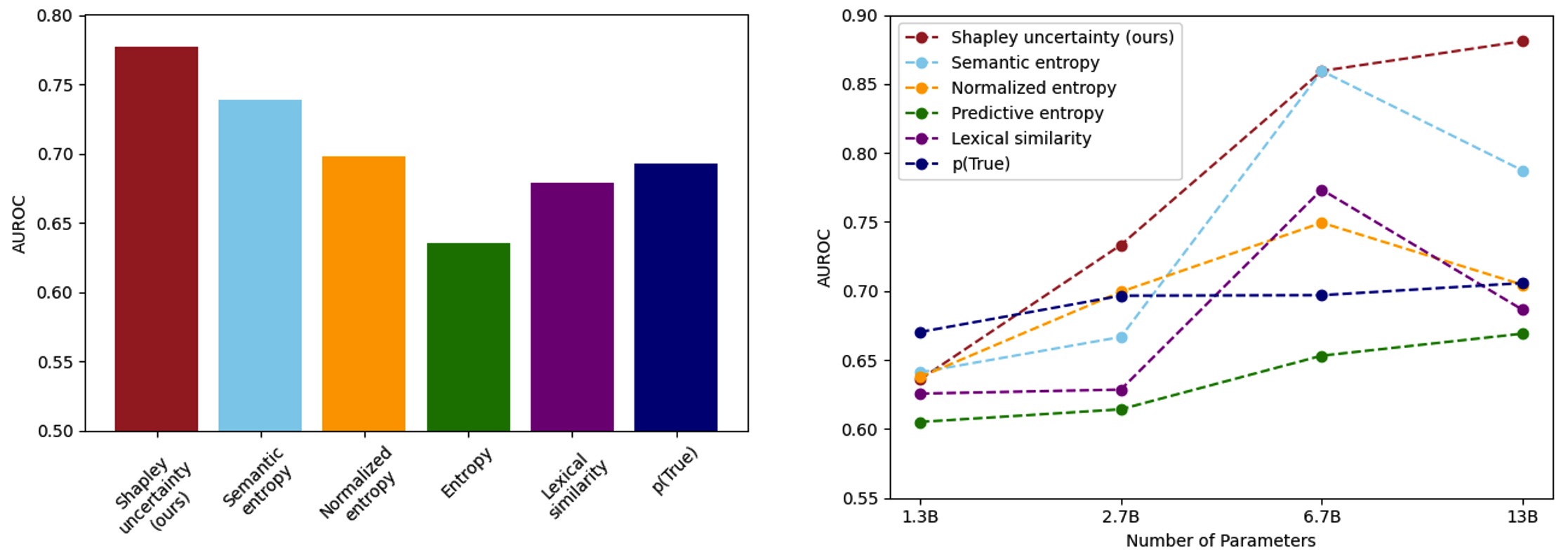}
\centering
\caption{
(\textbf{Left}) Our Shapley uncertainty (red) predicts model accuracy better than baselines on the answering dataset CoQA (average performance across OPT models from 1.3B to 13B parameters). 
(\textbf{Right}) The outstanding performance of Shapley uncertainty (red) becomes more pronounced with larger model sizes, while still maintaining effectiveness for smaller models.
}
\label{fig:2}
\vspace{-3mm}
\end{figure}

Regarding \textbf{models}, following \citet{kuhn2022semantic,liu2024uncertainty}, we utilize GPT-like OPT models \citep{zhang2022opt} of varying sizes, from 2.7B to 13B parameters. 
We also conduct experiments on Gemma-7B, LLaMA2-7B~\citep{touvron2023llama}, and LLaMA3-8B.
Falcon-7B~\citep{falcon40b}, Mistral-7B~\citep{jiang2023mistral7b}, Qwen2.5-14B~\citep{qwen2.5} and DeepSeek-R1-Distill-LLaMA-8B\citep{deepseekai2025deepseekr1incentivizingreasoningcapability} are also selected for systematically evaluating architectural variations across different design paradigms of SOTA large language models.

\label{baseline}
We evaluate our approach against several established \textbf{baselines}, namely predictive entropy \citep{kadavath2022language}, 
length-normalized predictive entropy \citep{malinin2020uncertainty}, 
lexical similarity measures \citep{fomicheva2020unsupervised}, and semantic entropy \citep{kuhn2022semantic}. 
Predictive entropy here is simply the entropy of the output distribution. 
While it has traditionally been employed in various domains as a standard gauge of uncertainty, such applications, including those by \citet{kadavath2022language}, have not incorporated length-normalization. 
Length-normalized predictive entropy, on the other hand, adjusts the joint log-probability of sequences by their length, a refinement suggested by \citet{malinin2020uncertainty} to account for NLG uncertainty. 
The p(True) metric, introduced by \citet{kadavath2022language}, seeks to quantify the likelihood of a model’s output being accurate by essentially querying the model on the veracity of its own answers, involving the generation of multiple answers and subsequent questioning for validation. 
Lexical similarity, meanwhile, is assessed by averaging the similarity of the answers within a set. 
Semantic entropy incorporates linguistic invariances created by common meanings.
Shapley uncertainty further considers the correlation between different generated sentences.


\begin{figure}[t!]
\includegraphics[width=0.48
\textwidth]{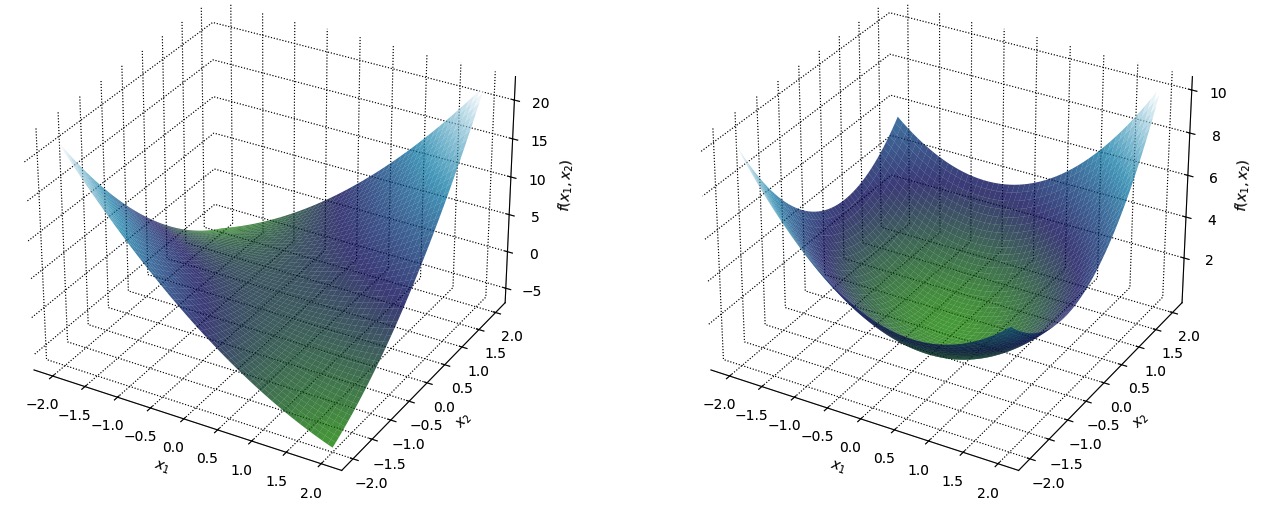}
\centering
\caption{
The figure illustrates the quadratic form for correlation matrices, 
demonstrating how our method transforms an inappropriate (non-positive semi-definite) correlation matrix (\textbf{Left}) into an appropriate (positive semi-definite) correlation matrix (\textbf{Right}).
}
\label{fig:3}
\vspace{-3mm}
\end{figure}

\begin{table*}[t!]
\centering
\label{tab:3}
\vspace{0mm}
\renewcommand\arraystretch{1.35}
\scalebox{0.8}{
\begin{tabular}{cccccccccccc}
\specialrule{.1em}{.075em}{.075em} 
\multirow{2}{*}{ Dataset } && \multirow{2}{*}{ LLM } && \multicolumn{6}{c}{ Benchmarks } && Ours \\
&& && MaxL & AvgL & MaxE & AvgE & SU & A4C  && Shap \\
\cline{1-1} \cline{3-3} \cline{5-10} \cline{12-12}
\multirow{3}{*}{ CoQA } && G-7B && 0.710 & 0.708 & 0.725 & 0.708 & 0.674 & 0.515 && \bf 0.739  \\
&& L-7B && 0.535 & 0.600 & 0.603 & 0.580 & 0.541 & 0.502 && \bf 0.712  \\
&& L-8B && 0.692 & 0.697 & 0.716 & 0.699 & 0.684 & 0.506 && \bf 0.756  \\
\cline{1-1} \cline{3-3} \cline{5-10} \cline{12-12}
\multirow{3}{*}{ WMT-14 } && G-7B && 0.668 & 0.589 & 0.637 & \bf  0.811 & 0.572 & 0.596 && 0.808  \\
&& L-7B && 0.606 & 0.712 & 0.583 & 0.711 & 0.513 & 0.506 && \bf  0.784  \\
&& L-8B && 0.554 & 0.685 & 0.616 & 0.729 & 0.510 & 0.502 && \bf 0.735  \\
\cline{1-1} \cline{3-3} \cline{5-10} \cline{12-12}
Average && - && 0.627 & 0.665 & 0.646 & 0.706 & 0.582 & 0.521 &&\bf  0.755 \\
\specialrule{.1em}{.075em}{.075em}
\end{tabular}
}
\caption{Out-of-sample AUROC performance for benchmarks and our methods on NLG tasks. G-7B, L-7B, and L-8B represent Gemma-7B, LLaMA2-7B, and LLaMA3-8B, respectively.
The methods MaxL, AvgL, MaxE, and AvgE are derived from the work of  \citet{manakul2023selfcheckgpt}. 
SU represents the semantic uncertainty estimation by \citet{kuhn2022semantic}, while the A4C column employs the ask-for-confidence approach developed by  \citet{tian2023just}. 
}
\label{tab:3}
\vspace{-1mm}
\end{table*}

For CoQA, Shapley uncertainty demonstrates superior performance over baseline methods in predicting the correctness of a model's answers. 
As shown in \Cref{fig:2} (right), using the larger-size models, Shapley uncertainty achieves a significantly higher AUROC compared to entropy (both with and without length-normalization), semantic entropy, and the lexical similarity baseline. 
Moreover, it substantially outperforms the p(True) metric. 
\Cref{fig:2} (left) shows that our method maintains its superiority in overall performance across models from 1.3B to 13B.

As shown in \Cref{fig:2.5}, our method demonstrates robust performance across diverse LLM architectures and datasets, validating its generalization capability for a wide range of question-answering tasks.
Comprehensive experimental results are detailed in \Cref{app:d}.

We conduct further experiments to compare Shapley uncertainty with the methods of MaxL, AvgL, MaxE, and AvgE which are proposed in \citet{manakul2023selfcheckgpt}, and the A4C method which employs the ask-for-confidence approach developed by \citet{tian2023just}.


\Cref{tab:3} presents a comparative analysis of our proposed Shapley uncertainty method against existing benchmarks, using AUROC as the evaluation metric. The numerical results yield several notable observations:
Firstly, Shapley uncertainty demonstrates superior performance in the majority of cases;
Secondly, there is one exception: for the Gemma-7B model on the WMT-14 dataset, the AvgE method achieves a marginally higher score (0.811) compared to Shapley uncertainty (0.808);
Thirdly, when considering average performance across all datasets and models, Shapley uncertainty exhibits a clear advantage over all other methods.
These findings underscore the robustness and effectiveness of the Shapley uncertainty method across diverse scenarios, with only minor exceptions.

In addition, as illustrated in \Cref{fig:3}, our method effectively transforms an inappropriate (non-positive semi-definite) correlation matrix into an appropriate (positive semi-definite) one. 
This transformation ensures that the resulting correlation matrix adheres to mathematical standards, a crucial aspect that may be overlooked by other methods.
More details are given in \Cref{app:b}.

\section{Limitation}

This work exhibits two primary limitations:
Firstly, the kernel function's ability to generate positive semi-definite correlation matrices is limited to finite sentence sets, reflecting a fundamental constraint in DNN-based models' capacity to produce mathematically well-defined correlations.
Secondly, computational constraints restricted our analysis to language models under 30 billion parameters, leaving the behavior of larger LLMs unexplored.
These limitations suggest promising directions for future research, particularly in developing more sophisticated correlation measures and extending the analysis to very large language models with enhanced computational resources.

\section{Conclusion}

We propose --- Shapley uncertainty --- a comprehensive approach to measure NLG outputs uncertainty, which extends beyond a simple threshold of shared meanings to encompass the intricate correlations among diverse sentences.
We employ a variant of the kernel method to ensure the correlation matrix satisfies Mercer's condition, a fundamental requirement for any valid correlation or distance metric. 
To address the challenges of uncertainty calculation in high-dimensional spaces, we apply the Shapley method to the resulting correlation matrix.
Our extensive empirical results demonstrate that the Shapley uncertainty metric more accurately predicts LLM performance on diverse datasets compared to existing baselines. 

\bibliography{custom}

\begin{thebibliography}{56}
\providecommand{\natexlab}[1]{#1}

\bibitem[{Abdar et~al.(2021)Abdar, Pourpanah, Hussain, Rezazadegan, Liu, Ghavamzadeh, Fieguth, Cao, Khosravi, Acharya et~al.}]{abdar2021review}
Moloud Abdar, Farhad Pourpanah, Sadiq Hussain, Dana Rezazadegan, Li~Liu, Mohammad Ghavamzadeh, Paul Fieguth, Xiaochun Cao, Abbas Khosravi, U~Rajendra Acharya, et~al. 2021.
\newblock A review of uncertainty quantification in deep learning: Techniques, applications and challenges.
\newblock \emph{Information fusion}, 76:243--297.

\bibitem[{Almazrouei et~al.(2023)Almazrouei, Alobeidli, Alshamsi, Cappelli, Cojocaru, Debbah, Goffinet, Heslow, Launay, Malartic, Noune, Pannier, and Penedo}]{falcon40b}
Ebtesam Almazrouei, Hamza Alobeidli, Abdulaziz Alshamsi, Alessandro Cappelli, Ruxandra Cojocaru, Merouane Debbah, Etienne Goffinet, Daniel Heslow, Julien Launay, Quentin Malartic, Badreddine Noune, Baptiste Pannier, and Guilherme Penedo. 2023.
\newblock {Falcon-40B}: an open large language model with state-of-the-art performance.

\bibitem[{Azaria and Mitchell(2023)}]{azaria2023internal}
Amos Azaria and Tom Mitchell. 2023.
\newblock The internal state of an llm knows when it's lying.
\newblock \emph{arXiv preprint arXiv:2304.13734}.

\bibitem[{Bojar et~al.(2014)Bojar, Buck, Federmann, Haddow, Koehn, Leveling, Monz, Pecina, Post, Saint-Amand et~al.}]{bojar2014findings}
Ond{\v{r}}ej Bojar, Christian Buck, Christian Federmann, Barry Haddow, Philipp Koehn, Johannes Leveling, Christof Monz, Pavel Pecina, Matt Post, Herve Saint-Amand, et~al. 2014.
\newblock Findings of the 2014 workshop on statistical machine translation.
\newblock In \emph{Proceedings of the ninth workshop on statistical machine translation}.

\bibitem[{Brown et~al.(2020)Brown, Mann, Ryder, Subbiah, Kaplan, Dhariwal, Neelakantan, Shyam, Sastry, Askell et~al.}]{brown2020language}
Tom Brown, Benjamin Mann, Nick Ryder, Melanie Subbiah, Jared~D Kaplan, Prafulla Dhariwal, Arvind Neelakantan, Pranav Shyam, Girish Sastry, Amanda Askell, et~al. 2020.
\newblock Language models are few-shot learners.
\newblock \emph{Advances in neural information processing systems}.

\bibitem[{CH-Wang et~al.(2023)CH-Wang, Van~Durme, Eisner, and Kedzie}]{ch2023androids}
Sky CH-Wang, Benjamin Van~Durme, Jason Eisner, and Chris Kedzie. 2023.
\newblock Do androids know they're only dreaming of electric sheep?
\newblock \emph{arXiv preprint arXiv:2312.17249}.

\bibitem[{Chen et~al.(2024)Chen, Liu, Chen, Gu, Wu, Tao, Fu, and Ye}]{chen2024inside}
Chao Chen, Kai Liu, Ze~Chen, Yi~Gu, Yue Wu, Mingyuan Tao, Zhihang Fu, and Jieping Ye. 2024.
\newblock Inside: Llms' internal states retain the power of hallucination detection.
\newblock \emph{arXiv preprint arXiv:2402.03744}.

\bibitem[{Chowdhery et~al.(2023)Chowdhery, Narang, Devlin, Bosma, Mishra, Roberts, Barham, Chung, Sutton, Gehrmann et~al.}]{chowdhery2023palm}
Aakanksha Chowdhery, Sharan Narang, Jacob Devlin, Maarten Bosma, Gaurav Mishra, Adam Roberts, Paul Barham, Hyung~Won Chung, Charles Sutton, Sebastian Gehrmann, et~al. 2023.
\newblock Palm: Scaling language modeling with pathways.
\newblock \emph{Journal of Machine Learning Research}.

\bibitem[{DeepSeek-AI et~al.(2025)DeepSeek-AI, Guo, Yang, Zhang, Song, Zhang, Xu, Zhu, Ma, Wang, Bi, Zhang, Yu, Wu, Wu, Gou, Shao, Li, Gao, Liu, Xue, Wang, Wu, Feng, Lu, Zhao, Deng, Zhang, Ruan, Dai, Chen, Ji, Li, Lin, Dai, Luo, Hao, Chen, Li, Zhang, Bao, Xu, Wang, Ding, Xin, Gao, Qu, Li, Guo, Li, Wang, Chen, Yuan, Qiu, Li, Cai, Ni, Liang, Chen, Dong, Hu, Gao, Guan, Huang, Yu, Wang, Zhang, Zhao, Wang, Zhang, Xu, Xia, Zhang, Zhang, Tang, Li, Wang, Li, Tian, Huang, Zhang, Wang, Chen, Du, Ge, Zhang, Pan, Wang, Chen, Jin, Chen, Lu, Zhou, Chen, Ye, Wang, Yu, Zhou, Pan, Li, Zhou, Wu, Ye, Yun, Pei, Sun, Wang, Zeng, Zhao, Liu, Liang, Gao, Yu, Zhang, Xiao, An, Liu, Wang, Chen, Nie, Cheng, Liu, Xie, Liu, Yang, Li, Su, Lin, Li, Jin, Shen, Chen, Sun, Wang, Song, Zhou, Wang, Shan, Li, Wang, Wei, Zhang, Xu, Li, Zhao, Sun, Wang, Yu, Zhang, Shi, Xiong, He, Piao, Wang, Tan, Ma, Liu, Guo, Ou, Wang, Gong, Zou, He, Xiong, Luo, You, Liu, Zhou, Zhu, Xu, Huang, Li, Zheng, Zhu, Ma, Tang, Zha, Yan, Ren, Ren, Sha, Fu, Xu, Xie, Zhang,
  Hao, Ma, Yan, Wu, Gu, Zhu, Liu, Li, Xie, Song, Pan, Huang, Xu, Zhang, and Zhang}]{deepseekai2025deepseekr1incentivizingreasoningcapability}
DeepSeek-AI, Daya Guo, Dejian Yang, Haowei Zhang, Junxiao Song, Ruoyu Zhang, Runxin Xu, Qihao Zhu, Shirong Ma, Peiyi Wang, Xiao Bi, Xiaokang Zhang, Xingkai Yu, Yu~Wu, Z.~F. Wu, Zhibin Gou, Zhihong Shao, Zhuoshu Li, Ziyi Gao, Aixin Liu, Bing Xue, Bingxuan Wang, Bochao Wu, Bei Feng, Chengda Lu, Chenggang Zhao, Chengqi Deng, Chenyu Zhang, Chong Ruan, Damai Dai, Deli Chen, Dongjie Ji, Erhang Li, Fangyun Lin, Fucong Dai, Fuli Luo, Guangbo Hao, Guanting Chen, Guowei Li, H.~Zhang, Han Bao, Hanwei Xu, Haocheng Wang, Honghui Ding, Huajian Xin, Huazuo Gao, Hui Qu, Hui Li, Jianzhong Guo, Jiashi Li, Jiawei Wang, Jingchang Chen, Jingyang Yuan, Junjie Qiu, Junlong Li, J.~L. Cai, Jiaqi Ni, Jian Liang, Jin Chen, Kai Dong, Kai Hu, Kaige Gao, Kang Guan, Kexin Huang, Kuai Yu, Lean Wang, Lecong Zhang, Liang Zhao, Litong Wang, Liyue Zhang, Lei Xu, Leyi Xia, Mingchuan Zhang, Minghua Zhang, Minghui Tang, Meng Li, Miaojun Wang, Mingming Li, Ning Tian, Panpan Huang, Peng Zhang, Qiancheng Wang, Qinyu Chen, Qiushi Du, Ruiqi Ge, Ruisong
  Zhang, Ruizhe Pan, Runji Wang, R.~J. Chen, R.~L. Jin, Ruyi Chen, Shanghao Lu, Shangyan Zhou, Shanhuang Chen, Shengfeng Ye, Shiyu Wang, Shuiping Yu, Shunfeng Zhou, Shuting Pan, S.~S. Li, Shuang Zhou, Shaoqing Wu, Shengfeng Ye, Tao Yun, Tian Pei, Tianyu Sun, T.~Wang, Wangding Zeng, Wanjia Zhao, Wen Liu, Wenfeng Liang, Wenjun Gao, Wenqin Yu, Wentao Zhang, W.~L. Xiao, Wei An, Xiaodong Liu, Xiaohan Wang, Xiaokang Chen, Xiaotao Nie, Xin Cheng, Xin Liu, Xin Xie, Xingchao Liu, Xinyu Yang, Xinyuan Li, Xuecheng Su, Xuheng Lin, X.~Q. Li, Xiangyue Jin, Xiaojin Shen, Xiaosha Chen, Xiaowen Sun, Xiaoxiang Wang, Xinnan Song, Xinyi Zhou, Xianzu Wang, Xinxia Shan, Y.~K. Li, Y.~Q. Wang, Y.~X. Wei, Yang Zhang, Yanhong Xu, Yao Li, Yao Zhao, Yaofeng Sun, Yaohui Wang, Yi~Yu, Yichao Zhang, Yifan Shi, Yiliang Xiong, Ying He, Yishi Piao, Yisong Wang, Yixuan Tan, Yiyang Ma, Yiyuan Liu, Yongqiang Guo, Yuan Ou, Yuduan Wang, Yue Gong, Yuheng Zou, Yujia He, Yunfan Xiong, Yuxiang Luo, Yuxiang You, Yuxuan Liu, Yuyang Zhou, Y.~X. Zhu,
  Yanhong Xu, Yanping Huang, Yaohui Li, Yi~Zheng, Yuchen Zhu, Yunxian Ma, Ying Tang, Yukun Zha, Yuting Yan, Z.~Z. Ren, Zehui Ren, Zhangli Sha, Zhe Fu, Zhean Xu, Zhenda Xie, Zhengyan Zhang, Zhewen Hao, Zhicheng Ma, Zhigang Yan, Zhiyu Wu, Zihui Gu, Zijia Zhu, Zijun Liu, Zilin Li, Ziwei Xie, Ziyang Song, Zizheng Pan, Zhen Huang, Zhipeng Xu, Zhongyu Zhang, and Zhen Zhang. 2025.
\newblock \href {https://arxiv.org/abs/2501.12948} {Deepseek-r1: Incentivizing reasoning capability in llms via reinforcement learning}.
\newblock \emph{Preprint}, arXiv:2501.12948.

\bibitem[{Desai and Durrett(2020)}]{desai2020calibration}
Shrey Desai and Greg Durrett. 2020.
\newblock Calibration of pre-trained transformers.
\newblock \emph{arXiv preprint arXiv:2003.07892}.

\bibitem[{Duan et~al.(2024)Duan, Yang, and Tam}]{duan2024llms}
Hanyu Duan, Yi~Yang, and Kar~Yan Tam. 2024.
\newblock Do llms know about hallucination? an empirical investigation of llm's hidden states.
\newblock \emph{arXiv preprint arXiv:2402.09733}.

\bibitem[{Duan et~al.(2023)Duan, Cheng, Wang, Wang, Zavalny, Xu, Kailkhura, and Xu}]{duan2023shifting}
Jinhao Duan, Hao Cheng, Shiqi Wang, Chenan Wang, Alex Zavalny, Renjing Xu, Bhavya Kailkhura, and Kaidi Xu. 2023.
\newblock Shifting attention to relevance: Towards the uncertainty estimation of large language models.
\newblock \emph{arXiv preprint arXiv:2307.01379}.

\bibitem[{Farquhar et~al.(2024)Farquhar, Kossen, Kuhn, and Gal}]{farquhar2024detecting}
Sebastian Farquhar, Jannik Kossen, Lorenz Kuhn, and Yarin Gal. 2024.
\newblock Detecting hallucinations in large language models using semantic entropy.
\newblock \emph{Nature}, 630(8017):625--630.

\bibitem[{Filos et~al.(2019)Filos, Farquhar, Gomez, Rudner, Kenton, Smith, Alizadeh, de~Kroon, and Gal}]{filos2019benchmarking}
Angelos Filos, Sebastian Farquhar, Aidan~N Gomez, Tim~GJ Rudner, Zachary Kenton, Lewis Smith, Milad Alizadeh, Arnoud de~Kroon, and Yarin Gal. 2019.
\newblock Benchmarking bayesian deep learning with diabetic retinopathy diagnosis.
\newblock \emph{Preprint at https://arxiv. org/abs/1912.10481}.

\bibitem[{Fomicheva et~al.(2020)Fomicheva, Sun, Yankovskaya, Blain, Guzm{\'a}n, Fishel, Aletras, Chaudhary, and Specia}]{fomicheva2020unsupervised}
Marina Fomicheva, Shuo Sun, Lisa Yankovskaya, Fr{\'e}d{\'e}ric Blain, Francisco Guzm{\'a}n, Mark Fishel, Nikolaos Aletras, Vishrav Chaudhary, and Lucia Specia. 2020.
\newblock Unsupervised quality estimation for neural machine translation.
\newblock \emph{Transactions of the Association for Computational Linguistics}, 8:539--555.

\bibitem[{Gawlikowski et~al.(2023)Gawlikowski, Tassi, Ali, Lee, Humt, Feng, Kruspe, Triebel, Jung, Roscher et~al.}]{gawlikowski2023survey}
Jakob Gawlikowski, Cedrique Rovile~Njieutcheu Tassi, Mohsin Ali, Jongseok Lee, Matthias Humt, Jianxiang Feng, Anna Kruspe, Rudolph Triebel, Peter Jung, Ribana Roscher, et~al. 2023.
\newblock A survey of uncertainty in deep neural networks.
\newblock \emph{Artificial Intelligence Review}, 56(Suppl 1):1513--1589.

\bibitem[{He et~al.(2020)He, Liu, Gao, and Chen}]{he2020deberta}
Pengcheng He, Xiaodong Liu, Jianfeng Gao, and Weizhu Chen. 2020.
\newblock Deberta: Decoding-enhanced bert with disentangled attention.
\newblock In \emph{International Conference on Learning Representations}.

\bibitem[{Hendrycks et~al.(2021)Hendrycks, Carlini, Schulman, and Steinhardt}]{hendrycks2021unsolved}
Dan Hendrycks, Nicholas Carlini, John Schulman, and Jacob Steinhardt. 2021.
\newblock Unsolved problems in ml safety.
\newblock \emph{arXiv preprint arXiv:2109.13916}.

\bibitem[{Hoffmann et~al.(2022)Hoffmann, Borgeaud, Mensch, Buchatskaya, Cai, Rutherford, Casas, Hendricks, Welbl, Clark et~al.}]{hoffmann2022training}
Jordan Hoffmann, Sebastian Borgeaud, Arthur Mensch, Elena Buchatskaya, Trevor Cai, Eliza Rutherford, Diego de~Las Casas, Lisa~Anne Hendricks, Johannes Welbl, Aidan Clark, et~al. 2022.
\newblock Training compute-optimal large language models.
\newblock \emph{arXiv preprint arXiv:2203.15556}.

\bibitem[{Jiang et~al.(2023)Jiang, Sablayrolles, Mensch, Bamford, Chaplot, de~las Casas, Bressand, Lengyel, Lample, Saulnier, Lavaud, Lachaux, Stock, Scao, Lavril, Wang, Lacroix, and Sayed}]{jiang2023mistral7b}
Albert~Q. Jiang, Alexandre Sablayrolles, Arthur Mensch, Chris Bamford, Devendra~Singh Chaplot, Diego de~las Casas, Florian Bressand, Gianna Lengyel, Guillaume Lample, Lucile Saulnier, Lélio~Renard Lavaud, Marie-Anne Lachaux, Pierre Stock, Teven~Le Scao, Thibaut Lavril, Thomas Wang, Timothée Lacroix, and William~El Sayed. 2023.
\newblock \href {https://arxiv.org/abs/2310.06825} {Mistral 7b}.
\newblock \emph{Preprint}, arXiv:2310.06825.

\bibitem[{Jin et~al.(2025)Jin, Yi, Huang, Schewe, and Huang}]{jin2025s}
Gaojie Jin, Xinping Yi, Wei Huang, Sven Schewe, and Xiaowei Huang. 2025.
\newblock S-o: Enhancing adversarial training with second-order statistics of weights.
\newblock \emph{IEEE Transactions on Pattern Analysis \& Machine Intelligence}, (01):1--15.

\bibitem[{Jin et~al.(2020)Jin, Yi, Zhang, Zhang, Schewe, and Huang}]{jin2020does}
Gaojie Jin, Xinping Yi, Liang Zhang, Lijun Zhang, Sven Schewe, and Xiaowei Huang. 2020.
\newblock How does weight correlation affect generalisation ability of deep neural networks?
\newblock \emph{Advances in Neural Information Processing Systems}, 33:21346--21356.

\bibitem[{Joshi et~al.(2017)Joshi, Choi, Weld, and Zettlemoyer}]{joshi2017triviaqa}
Mandar Joshi, Eunsol Choi, Daniel~S Weld, and Luke Zettlemoyer. 2017.
\newblock Triviaqa: A large scale distantly supervised challenge dataset for reading comprehension.
\newblock \emph{arXiv preprint arXiv:1705.03551}.

\bibitem[{Kadavath et~al.(2022)Kadavath, Conerly, Askell, Henighan, Drain, Perez, Schiefer, Hatfield-Dodds, DasSarma, Tran-Johnson et~al.}]{kadavath2022language}
Saurav Kadavath, Tom Conerly, Amanda Askell, Tom Henighan, Dawn Drain, Ethan Perez, Nicholas Schiefer, Zac Hatfield-Dodds, Nova DasSarma, Eli Tran-Johnson, et~al. 2022.
\newblock Language models (mostly) know what they know.
\newblock \emph{arXiv preprint arXiv:2207.05221}.

\bibitem[{Kendall and Gal(2017)}]{kendall2017uncertainties}
Alex Kendall and Yarin Gal. 2017.
\newblock What uncertainties do we need in bayesian deep learning for computer vision?
\newblock \emph{Advances in neural information processing systems}, 30.

\bibitem[{Krithara et~al.(2023)Krithara, Nentidis, Bougiatiotis, and Paliouras}]{bioasq}
Anastasia Krithara, Anastasios Nentidis, Konstantinos Bougiatiotis, and Georgios Paliouras. 2023.
\newblock \href {https://doi.org/10.1038/s41597-023-02068-4} {Bioasq-qa: A manually curated corpus for biomedical question answering}.
\newblock \emph{Scientific Data}, 10:170.

\bibitem[{Kuhn et~al.(2023)Kuhn, Gal, and Farquhar}]{kuhn2022semantic}
Lorenz Kuhn, Yarin Gal, and Sebastian Farquhar. 2023.
\newblock Semantic uncertainty: Linguistic invariances for uncertainty estimation in natural language generation.
\newblock In \emph{International Conference on Learning Representations}.

\bibitem[{Kumar et~al.(2023)Kumar, Lu, Gupta, Palepu, Bellamy, Raskar, and Beam}]{kumar2023conformal}
Bhawesh Kumar, Charlie Lu, Gauri Gupta, Anil Palepu, David Bellamy, Ramesh Raskar, and Andrew Beam. 2023.
\newblock Conformal prediction with large language models for multi-choice question answering.
\newblock \emph{arXiv preprint arXiv:2305.18404}.

\bibitem[{Lahlou et~al.(2022)Lahlou, Jain, Nekoei, Butoi, Bertin, Rector-Brooks, Korablyov, and Bengio}]{lahlou2022deup}
Salem Lahlou, Moksh Jain, Hadi Nekoei, Victor~I Butoi, Paul Bertin, Jarrid Rector-Brooks, Maksym Korablyov, and Yoshua Bengio. 2022.
\newblock Deup: Direct epistemic uncertainty prediction.
\newblock \emph{Transactions on Machine Learning Research}.

\bibitem[{Lee et~al.(2019)Lee, Chang, and Toutanova}]{lee-etal-2019-latent}
Kenton Lee, Ming-Wei Chang, and Kristina Toutanova. 2019.
\newblock \href {https://doi.org/10.18653/v1/P19-1612} {Latent retrieval for weakly supervised open domain question answering}.
\newblock In \emph{Proceedings of the 57th Annual Meeting of the Association for Computational Linguistics}, pages 6086--6096, Florence, Italy. Association for Computational Linguistics.

\bibitem[{Lin and Och(2004{\natexlab{a}})}]{lin2004automatic}
Chin-Yew Lin and Franz~Josef Och. 2004{\natexlab{a}}.
\newblock Automatic evaluation of machine translation quality using longest common subsequence and skip-bigram statistics.
\newblock In \emph{Proceedings of the 42nd annual meeting of the association for computational linguistics (ACL-04)}, pages 605--612.

\bibitem[{Lin and Och(2004{\natexlab{b}})}]{lin2004orange}
Chin-Yew Lin and Franz~Josef Och. 2004{\natexlab{b}}.
\newblock Orange: a method for evaluating automatic evaluation metrics for machine translation.
\newblock In \emph{COLING 2004: Proceedings of the 20th International Conference on Computational Linguistics}, pages 501--507.

\bibitem[{Lin et~al.(2022{\natexlab{a}})Lin, Hilton, and Evans}]{lin2022teaching}
Stephanie Lin, Jacob Hilton, and Owain Evans. 2022{\natexlab{a}}.
\newblock Teaching models to express their uncertainty in words.
\newblock \emph{Transactions on Machine Learning Research}.

\bibitem[{Lin et~al.(2022{\natexlab{b}})Lin, Liu, and Shang}]{lin2022towards}
Zi~Lin, Jeremiah~Zhe Liu, and Jingbo Shang. 2022{\natexlab{b}}.
\newblock Towards collaborative neural-symbolic graph semantic parsing via uncertainty.
\newblock \emph{Findings of the Association for Computational Linguistics: ACL 2022}.

\bibitem[{Liu et~al.(2024)Liu, Pan, Li, and Chen}]{liu2024uncertainty}
Linyu Liu, Yu~Pan, Xiaocheng Li, and Guanting Chen. 2024.
\newblock Uncertainty estimation and quantification for llms: A simple supervised approach.
\newblock \emph{arXiv preprint arXiv:2404.15993}.

\bibitem[{Malinin and Gales(2020)}]{malinin2020uncertainty}
Andrey Malinin and Mark Gales. 2020.
\newblock Uncertainty estimation in autoregressive structured prediction.
\newblock In \emph{International Conference on Learning Representations}.

\bibitem[{Manakul et~al.(2023)Manakul, Liusie, and Gales}]{manakul2023selfcheckgpt}
Potsawee Manakul, Adian Liusie, and Mark~JF Gales. 2023.
\newblock Selfcheckgpt: Zero-resource black-box hallucination detection for generative large language models.
\newblock \emph{arXiv preprint arXiv:2303.08896}.

\bibitem[{Mohri and Hashimoto(2024)}]{mohri2024language}
Christopher Mohri and Tatsunori Hashimoto. 2024.
\newblock Language models with conformal factuality guarantees.
\newblock \emph{arXiv preprint arXiv:2402.10978}.

\bibitem[{Papineni et~al.(2002)Papineni, Roukos, Ward, and Zhu}]{papineni2002bleu}
Kishore Papineni, Salim Roukos, Todd Ward, and Wei-Jing Zhu. 2002.
\newblock Bleu: a method for automatic evaluation of machine translation.
\newblock In \emph{Proceedings of the 40th annual meeting of the Association for Computational Linguistics}, pages 311--318.

\bibitem[{Patel et~al.(2021)Patel, Bhattamishra, and Goyal}]{patel-etal-2021-nlp}
Arkil Patel, Satwik Bhattamishra, and Navin Goyal. 2021.
\newblock \href {https://doi.org/10.18653/v1/2021.naacl-main.168} {Are {NLP} models really able to solve simple math word problems?}
\newblock In \emph{Proceedings of the 2021 Conference of the North American Chapter of the Association for Computational Linguistics: Human Language Technologies}, pages 2080--2094, Online. Association for Computational Linguistics.

\bibitem[{Plaut et~al.(2024)Plaut, Nguyen, and Trinh}]{plaut2024softmax}
Benjamin Plaut, Khanh Nguyen, and Tu~Trinh. 2024.
\newblock Softmax probabilities (mostly) predict large language model correctness on multiple-choice q\&a.
\newblock \emph{arXiv preprint arXiv:2402.13213}.

\bibitem[{Quach et~al.(2023)Quach, Fisch, Schuster, Yala, Sohn, Jaakkola, and Barzilay}]{quach2023conformal}
Victor Quach, Adam Fisch, Tal Schuster, Adam Yala, Jae~Ho Sohn, Tommi~S Jaakkola, and Regina Barzilay. 2023.
\newblock Conformal language modeling.
\newblock \emph{arXiv preprint arXiv:2306.10193}.

\bibitem[{Rajpurkar et~al.(2016)Rajpurkar, Zhang, Lopyrev, and Liang}]{rajpurkar-etal-2016-squad}
Pranav Rajpurkar, Jian Zhang, Konstantin Lopyrev, and Percy Liang. 2016.
\newblock \href {https://doi.org/10.18653/v1/D16-1264} {{SQ}u{AD}: 100,000+ questions for machine comprehension of text}.
\newblock In \emph{Proceedings of the 2016 Conference on Empirical Methods in Natural Language Processing}, pages 2383--2392, Austin, Texas. Association for Computational Linguistics.

\bibitem[{Reddy et~al.(2019)Reddy, Chen, and Manning}]{reddy2019coqa}
Siva Reddy, Danqi Chen, and Christopher~D Manning. 2019.
\newblock Coqa: A conversational question answering challenge.
\newblock \emph{Transactions of the Association for Computational Linguistics}, 7:249--266.

\bibitem[{Si et~al.(2022)Si, Zhao, Min, and Boyd-Graber}]{si2022re}
Chenglei Si, Chen Zhao, Sewon Min, and Jordan Boyd-Graber. 2022.
\newblock Re-examining calibration: The case of question answering.
\newblock \emph{arXiv preprint arXiv:2205.12507}.

\bibitem[{Slobodkin et~al.(2023)Slobodkin, Goldman, Caciularu, Dagan, and Ravfogel}]{slobodkin2023curious}
Aviv Slobodkin, Omer Goldman, Avi Caciularu, Ido Dagan, and Shauli Ravfogel. 2023.
\newblock The curious case of hallucinatory (un) answerability: Finding truths in the hidden states of over-confident large language models.
\newblock In \emph{Proceedings of the 2023 Conference on Empirical Methods in Natural Language Processing}, pages 3607--3625.

\bibitem[{Su et~al.(2024)Su, Wang, Ai, Hu, Wu, Zhou, and Liu}]{su2024unsupervised}
Weihang Su, Changyue Wang, Qingyao Ai, Yiran Hu, Zhijing Wu, Yujia Zhou, and Yiqun Liu. 2024.
\newblock Unsupervised real-time hallucination detection based on the internal states of large language models.
\newblock \emph{arXiv preprint arXiv:2403.06448}.

\bibitem[{Tian et~al.(2023)Tian, Mitchell, Zhou, Sharma, Rafailov, Yao, Finn, and Manning}]{tian2023just}
Katherine Tian, Eric Mitchell, Allan Zhou, Archit Sharma, Rafael Rafailov, Huaxiu Yao, Chelsea Finn, and Christopher~D Manning. 2023.
\newblock Just ask for calibration: Strategies for eliciting calibrated confidence scores from language models fine-tuned with human feedback.
\newblock \emph{arXiv preprint arXiv:2305.14975}.

\bibitem[{Touvron et~al.(2023)Touvron, Martin, Stone, Albert, Almahairi, Babaei, Bashlykov, Batra, Bhargava, Bhosale et~al.}]{touvron2023llama}
Hugo Touvron, Louis Martin, Kevin Stone, Peter Albert, Amjad Almahairi, Yasmine Babaei, Nikolay Bashlykov, Soumya Batra, Prajjwal Bhargava, Shruti Bhosale, et~al. 2023.
\newblock Llama 2: Open foundation and fine-tuned chat models.
\newblock \emph{arXiv preprint arXiv:2307.09288}.

\bibitem[{Verma et~al.(2023)Verma, Tran, Ali, and Min}]{verma2023reducing}
Shreyas Verma, Kien Tran, Yusuf Ali, and Guangyu Min. 2023.
\newblock Reducing llm hallucinations using epistemic neural networks.
\newblock \emph{arXiv preprint arXiv:2312.15576}.

\bibitem[{Xu et~al.(2024)Xu, Jain, and Kankanhalli}]{xu2024hallucination}
Ziwei Xu, Sanjay Jain, and Mohan Kankanhalli. 2024.
\newblock Hallucination is inevitable: An innate limitation of large language models.
\newblock \emph{arXiv preprint arXiv:2401.11817}.

\bibitem[{Yang et~al.(2025)Yang, Yu, Li, Liu, Huang, Huang, Jiang, Tu, Zhang, Zhou, Lin, Dang, Yang, Yu, Li, Sun, Zhu, Men, He, Xu, Yin, Yu, Qiu, Ren, Yang, Li, Xu, and Zhang}]{qwen2.5}
An~Yang, Bowen Yu, Chengyuan Li, Dayiheng Liu, Fei Huang, Haoyan Huang, Jiandong Jiang, Jianhong Tu, Jianwei Zhang, Jingren Zhou, Junyang Lin, Kai Dang, Kexin Yang, Le~Yu, Mei Li, Minmin Sun, Qin Zhu, Rui Men, Tao He, Weijia Xu, Wenbiao Yin, Wenyuan Yu, Xiafei Qiu, Xingzhang Ren, Xinlong Yang, Yong Li, Zhiying Xu, and Zipeng Zhang. 2025.
\newblock Qwen2.5-1m technical report.
\newblock \emph{arXiv preprint arXiv:2501.15383}.

\bibitem[{Ye and Durrett(2021)}]{ye2021can}
Xi~Ye and Greg Durrett. 2021.
\newblock Can explanations be useful for calibrating black box models?
\newblock \emph{arXiv preprint arXiv:2110.07586}.

\bibitem[{Zhang et~al.(2023)Zhang, Xu, Yang, Jin, Huang, and Zhang}]{zhang2023trajpac}
Liang Zhang, Nathaniel Xu, Pengfei Yang, Gaojie Jin, Cheng-Chao Huang, and Lijun Zhang. 2023.
\newblock Trajpac: Towards robustness verification of pedestrian trajectory prediction models.
\newblock In \emph{Proceedings of the IEEE/CVF International Conference on Computer Vision}, pages 8327--8339.

\bibitem[{Zhang et~al.(2021)Zhang, Gong, and Choi}]{zhang2021knowing}
Shujian Zhang, Chengyue Gong, and Eunsol Choi. 2021.
\newblock Knowing more about questions can help: Improving calibration in question answering.
\newblock \emph{arXiv preprint arXiv:2106.01494}.

\bibitem[{Zhang et~al.(2022)Zhang, Roller, Goyal, Artetxe, Chen, Chen, Dewan, Diab, Li, Lin et~al.}]{zhang2022opt}
Susan Zhang, Stephen Roller, Naman Goyal, Mikel Artetxe, Moya Chen, Shuohui Chen, Christopher Dewan, Mona Diab, Xian Li, Xi~Victoria Lin, et~al. 2022.
\newblock Opt: Open pre-trained transformer language models.
\newblock \emph{arXiv preprint arXiv:2205.01068}.

\end{thebibliography}

\clearpage
\appendix

\section{Proofs}
\label{app:a}
\begin{proof}[Proof for Proposition 3.1]
Assume $\R\in\mathbb{R}^{n\times n}$ is the correlation matrix generated by $\K(\D(\cdot,\cdot))$ with $\beta=1$, and $\R'\in\mathbb{R}^{n\times n}$ is the correlation matrix generated with some $\beta$, we need to to prove that there exists a parameter $\beta$ such that the matrix $\R'$ is positive semi-definite.

Since $\R$ is symmetric, all its eigenvalues are real.
Denote the eigenvalues of $\R$ by $\lambda_1,...,\lambda_n$.
We know $\R'=\mathbf{I}+\beta(\R-\bf I)$, where $\bf I$ is the identity matrix. 
Consider the matrix $\R'$ and analyze its eigenvalues,
if $\lambda_i$ is an eigenvalue of $\R$, then the corresponding eigenvalue of $\R'$  is given by $\lambda'_i=1+\beta(\lambda_i-1)$.

As a matrix is positive semi-definite if all its eigenvalues are non-negative.
To ensure $\lambda'_i\ge 0$, we have 
$$
1+\beta(\lambda_i-1)\ge 0.
$$
According to Gershgorin circle theorem, $|\lambda_i| \le n$.
Thus for all $0<\beta\le \frac{1}{n+1}$, we can ensure $\lambda'_i\ge 0$ and the matrix $\R'$ is positive semi-definite.
\end{proof}
\noindent
\emph{Remark. The hyper-parameter $\beta\in(0,\frac{1}{n+1}]$ ensures positive semi-definiteness of the correlation matrix even in the worst case. 
In practice, we set $\beta=0.5$ as our experiments demonstrate this value maintains positive semi-definiteness across all correlation matrices while avoiding the conservative bounds required for worst-case scenarios.
}

\begin{proof}[Proof for Proposition 3.6 (1)]
Let $\tilde {\bf s}=[{\bf s}_1,...,{\bf s}_n]$, $\phi(\cdot)$ be the Sharpley uncertainty metric defined in \eqref{eq:marginal uncertainty} and \eqref{eq:shapley uncertainty}.
For all ${\bf{s}}_i \in \tilde {\bf s}$ such that $$|\rho({\bf s}_i, \underset{{\bf s}_j \in \tilde {\bf s}}{\arg\min} \phi({\bf s}_j | \tilde {\bf s}))|<1,$$ 
suppose there exists a random variable $\g$ satisfies $$\rho(\g,\underset{{\bf s}_j \in \tilde {\bf s}}{\arg\min} \phi({\bf s}_j | \tilde {\bf s}))=1.$$
Then we have
$$
\phi({\bf s}_i | \tilde {\bf s}) > \phi(\g|\tilde {\bf s}\backslash {\bf s}_i\cup \g), 
$$
and for all ${\bf s}_j \in \tilde {\bf s}\backslash{\bf s}_i$, we get
$$
\phi({\bf s}_j | \tilde {\bf s}) \ge \phi({\bf s}_j|\tilde {\bf s}\backslash {\bf s}_i\cup \g). 
$$
Thus we have 
$$
\phi(\tilde {\bf s}) > \phi(\tilde {\bf s}\backslash {\bf s}_i\cup \g). 
$$
Shapley uncertainty metric satisfies Property~\ref{property1}.
\end{proof}

\begin{proof}[Proof for Proposition 3.6 (2)]
Let $\g$ be an independent random variable, then for all ${\bf{s}}_i \in \tilde {\bf s}$ such that 
$$\exists {\bf s}_j \in \tilde {\bf s}\backslash{\bf s}_i, \quad |\rho({\bf s}_i,{\bf s}_j)|>0,$$ 
we get
$$
\phi(\g|\tilde {\bf s}\backslash{\bf s}_i \cup \g) > \phi({\bf s}_i|\tilde {\bf s}),
$$
and for all ${\bf s}_j \in \tilde {\bf s}\backslash{\bf s}_i$, we have
$$
\phi({\bf s}_j | \tilde {\bf s}\backslash{\bf s}_i\cup \g) \ge \phi({\bf s}_j | \tilde {\bf s}).
$$
Thus, we get
$$
\phi(\tilde {\bf s}\backslash{\bf s}_i\cup \g) > \phi(\tilde {\bf s}). 
$$
Shapley uncertainty metric satisfies Property~\ref{property2}.
\end{proof}

\begin{proof}[Proof for Proposition 3.6 (3)]
Let $\g$ be a variable that may be correlated with elements in $\tilde {\bf s}$. 
For some ${\bf s}_j \in \tilde {\bf s}$, if the following inequality holds for all non-empty subsets $\X\subseteq\{1,...,n\}\backslash {j}$ with corresponding sequence ${\bf x}=[{\bf s}_i]_{i\in \X}$:
$$
\phi(\g|{\bf x}\cup {\g})>\phi({\bf s}_j|{\bf x}\cup{{\bf s}_j}),
$$
we get
$$
\phi(\g|\tilde {\bf s}\backslash{\bf s}_j\cup \g) > \phi({\bf s}_j|\tilde {\bf s}),
$$
and for all ${\bf s}_i\in \tilde {\bf s}\backslash{\bf s}_j$, we have
$$
\phi({\bf s}_i|\tilde {\bf s}\backslash{\bf s}_j\cup \g) \ge \phi({\bf s}_i|\tilde {\bf s}).
$$
Thus, 
$$
\phi(\tilde {\bf s}\backslash{\bf s}_j\cup \g) > \phi(\tilde {\bf s}).
$$
Shapley uncertainty metric satisfies Property~\ref{property3}.
\end{proof}

\section{More details of experiments}
\label{app:b}

We ran all of our experiments on 3 $\times$ NVIDIA A100 and 4 $\times$ RTX 4090 GPUs.

In the following, we provide detailed information on the two tasks utilized in our experiments.
\begin{itemize}
    \item Question answering. We follow \citet{kuhn2022semantic} and use the CoQA and TriviaQA datasets.
    We utilize a subset of 8000 questions from each, aligning with the size of CoQA's training split. Our evaluation metric of choice is a fuzzy matching criterion: $RougeL(\s,\s')>0.3$, implying that an answer is considered correct if its Rouge-L \citep{lin2004automatic} similarity to the reference answer exceeds 0.3.
    \item Machine translation. We utilize the WMT 2014 dataset \citep{bojar2014findings}, 
    and employ the BLEU score \citep{papineni2002bleu,lin2004orange} as our scoring function $BLEU(\cdot, \cdot)$. 
    Generated translations $\s$ are classified as correct if $BLEU(\s, \s_{\text{true}})$ exceeds 0.3, and incorrect otherwise.
\end{itemize}

\begin{figure*}[t!]
\includegraphics[width=0.68
\textwidth]{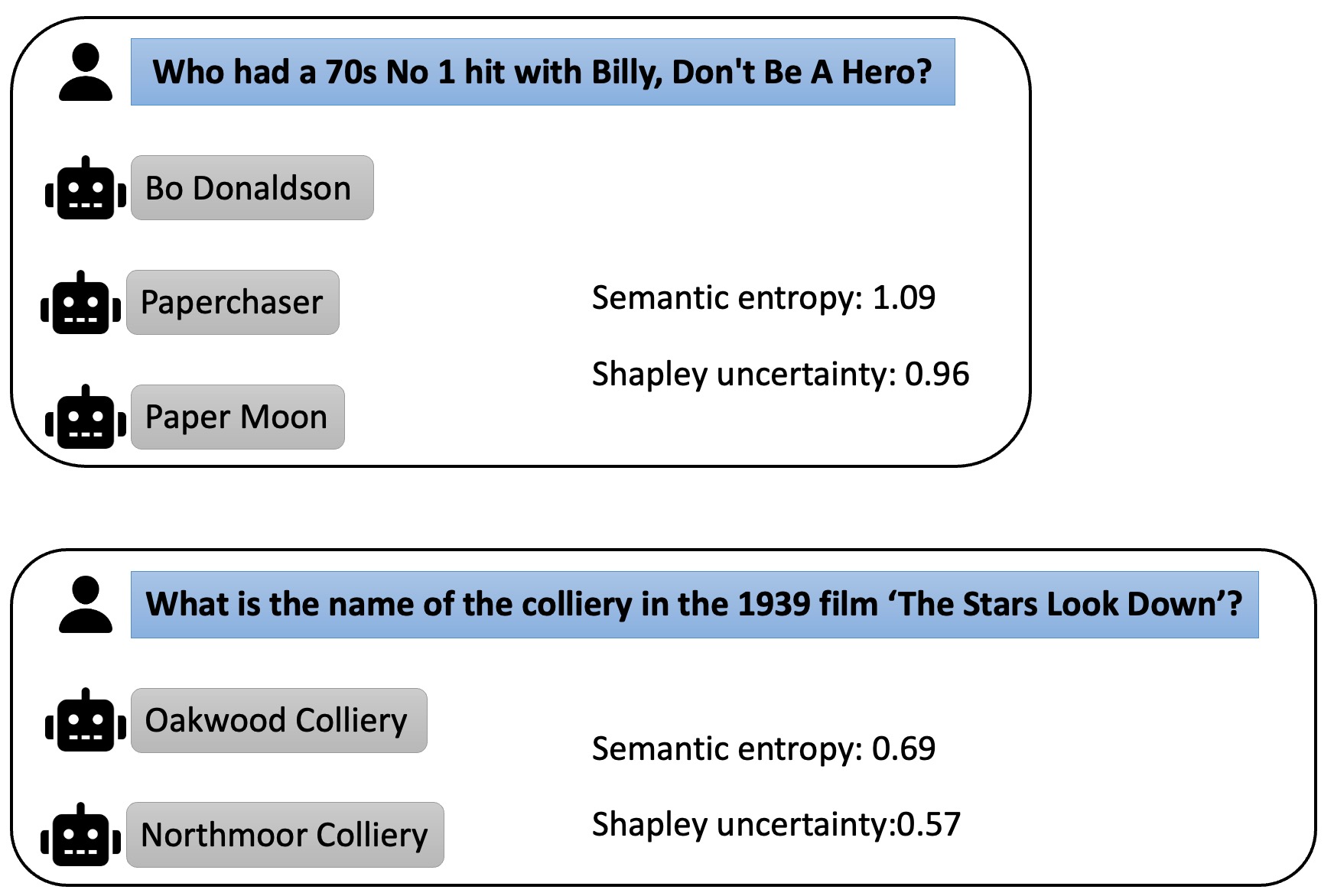}
\centering
\caption{
Two examples of Shapley uncertainty over the answers of LLaMA2-7B (\textbf{Upper}) and Gemma-7B (\textbf{Bottom}).
}
\label{fig:4}
\vspace{-3mm}
\end{figure*}

We also outline the various prompting templates employed across different tasks. 
We utilize few-shot prompting, with templates consistently structured into four key components:
\begin{itemize}
    \item Introduction: Provides context for the task (omitted only for WMT).
    \item Examples: Consists of $r$ distinct question-answer pairs, formatted identically to the subsequent question and answer sections.
    \item Question: Presents the specific query for the model to address.
    \item Answer: Reserved for the model's response.
\end{itemize}
The model receives as input the entire template string, excluding the reference answer. This approach ensures a consistent format across tasks while allowing for task-specific customization.

\quad

\emph{CoQA}

\emph{Reading the passage and answer given questions accordingly.}

\emph{Passage: a passage in COQA}

\emph{Examples:}

\emph{$r$ distinct question-answer pairs related to the given passage}

\emph{Q: a new question related to the given passage }

\emph{A: reference answer}

\quad

\emph{TriviaQA}

\emph{Answer the question as following examples.}

\emph{Examples:}

\emph{$r$ distinct question-answer pairs }

\emph{Q: a new question}

\emph{A: reference answer}

\quad

\emph{WMT}

\emph{$r$ distinct question-answer pairs}

\emph{Q: What is the English translation of the following sentence? (a French sentence)}

\emph{A: reference answer (an English sentence)}

\section{Related work}

\subsection{Uncertainty metric}

Uncertainty estimation for NLG has become a critical area of research. 
While uncertainty estimation and calibration in traditional machine learning are well-established \citep{abdar2021review,gawlikowski2023survey}, the rapid advancement of LLMs presents new challenges. 
There is an urgent need to better understand and measure uncertainty in LLMs' responses, particularly when dealing with variable-length sentence outputs rather than fixed-dimension data.
A significant body of work has emerged focusing on unsupervised methods for quantifying uncertainty. These approaches leverage various techniques: Entropy-based methods \citep{malinin2020uncertainty}; Similarity measures \citep{fomicheva2020unsupervised,lin2022towards}; Semantic analysis \citep{kuhn2022semantic,duan2023shifting,farquhar2024detecting}; Information from logits or hidden states \citep{kadavath2022language,chen2024inside,su2024unsupervised,plaut2024softmax}.
These techniques aim to craft effective uncertainty metrics. For black-box models, some metrics can be computed using multiple sampled outputs from LLMs. White-box models offer additional information, such as output distributions, logit values, and hidden layer data, facilitating easier computation of uncertainty metrics.
Other related uncertainty estimation methods, including calibration and conformal prediction, are discussed in works by \citet{desai2020calibration,zhang2021knowing,ye2021can,si2022re,quach2023conformal,kumar2023conformal,mohri2024language,jin2020does,jin2025s,zhang2023trajpac}.

This study focuses on developing an appropriate uncertainty metric for black-box models. 
Our approach distinguishes itself from previous works by addressing three critical aspects: (1) the correlation between sentences, (2) the positive semi-definiteness of the correlation matrix, and (3) the computation of uncertainty metrics in high-dimensional spaces. 
These considerations collectively represent a more comprehensive approach to uncertainty estimation, as prior studies have not simultaneously addressed all these factors.

\subsection{Uncertainty and hallucination detection}
A recent trend in the field of natural language processing involves leveraging uncertainty estimation techniques for hallucination detection in LLMs. 
This approach is grounded in the hypothesis that the values of logits and hidden states contain implicit information about the model's confidence in its generated output.
Several studies have explored this connection:
\begin{itemize}
    \item \citet{azaria2023internal} trained a classifier using hidden layer activations to predict hallucinations.
    \item \citet{verma2023reducing} developed epistemic neural networks aimed at mitigating hallucinations.
    \item \citet{slobodkin2023curious} demonstrated that hidden layer information from LLM outputs can indicate query answerability, indirectly informing hallucination detection.
    \item \citet{chen2024inside} created an unsupervised metric utilizing LLMs' internal states for hallucination detection.
\end{itemize}
Additional research in this area includes works by \citet{ch2023androids,duan2024llms,xu2024hallucination}.
While a universally accepted definition of hallucination remains elusive, with variations across the literature, uncertainty estimation offers a well-defined framework. 
Consequently, advancements in uncertainty estimation has the potential to significantly contribute to the task of hallucination detection.

\section{Supplementary results}
\label{app:d}

\begin{table*}[t!]
\centering
\vspace{0mm}
\renewcommand\arraystretch{1.35}
\scalebox{0.8}{
\begin{tabular}{ccccccccccc}
\specialrule{.1em}{.075em}{.075em} 
\multirow{2}{*}{ Dataset } && \multirow{2}{*}{ LLM } && \multicolumn{5}{c}{ Uncertainty Algorithms } && Ours \\
&& && SE & DSE & NE & P(T) & ER && Shap \\
\cline{1-1} \cline{3-3} \cline{5-9} \cline{11-11}
\multirow{6}{*}{ SQuAD } && L-7B && 0.748 & 0.741 & 0.705 & 0.602 & 0.618 && 0.733  \\
&& L-13B && 0.757 & 0.765 & 0.739 & 0.665 & 0.594 && 0.764  \\
&& F-7B && 0.691 & 0.679 & 0.692 & 0.492 & 0.634 && 0.670  \\
&& M-7B && 0.725 & 0.728 & 0.729 & 0.662 & 0.608 && 0.728  \\
&& Q-14B && 0.752 & 0.754 & 0.730 & 0.685 & 0.604 && 0.764  \\
&& D-8B && 0.792 & 0.797 & 0.744 & 0.667 & 0.656 && 0.787  \\
\cline{1-1} \cline{3-3} \cline{5-9} \cline{11-11}
\multirow{6}{*}{ BioASQ } && L-7B && 0.856 & 0.857 & 0.655 & 0.802 & 0.728 && 0.861  \\
&& L-13B && 0.838 & 0.828 & 0.723 & 0.789 & 0.752 && 0.819  \\
&& F-7B && 0.869 & 0.864 & 0.696 & 0.575 & 0.838 && 0.859  \\
&& M-7B && 0.894 & 0.889 & 0.764 & 0.738 & 0.811 && 0.892  \\
&& Q-14B && 0.855 & 0.845 & 0.803 & 0.694 & 0.769 && 0.867  \\
&& D-8B && 0.837 & 0.833 & 0.832 & 0.816 & 0.820 && 0.839  \\
\cline{1-1} \cline{3-3} \cline{5-9} \cline{11-11}
\multirow{6}{*}{ NQ-Open } && L-7B && 0.721 & 0.724 & 0.716 & 0.649 & 0.589 && 0.729  \\
&& L-13B && 0.715 & 0.720 & 0.686 & 0.685 & 0.625 && 0.714  \\
&& F-7B && 0.771 & 0.787 & 0.762 & 0.559 & 0.660 && 0.776  \\
&& M-7B && 0.765 & 0.767 & 0.728 & 0.714 & 0.565 && 0.778  \\
&& Q-14B && 0.815 & 0.820 & 0.775 & 0.714 & 0.658 && 0.812  \\
&& D-8B && 0.753 & 0.763 & 0.759 & 0.675 & 0.580 && 0.764  \\
\cline{1-1} \cline{3-3} \cline{5-9} \cline{11-11}
\multirow{6}{*}{ SVAMP } && L-7B && 0.825 & 0.826 & 0.813 & 0.594 & 0.786 && 0.830  \\
&& L-13B && 0.891 & 0.890 & 0.873 & 0.826 & 0.830 && 0.887 \\
&& F-7B && 0.734 & 0.726 & 0.693 & 0.552 & 0.701 && 0.731 \\
&& M-7B && 0.891 & 0.888 & 0.858 & 0.731 & 0.858 && 0.886  \\
&& Q-14B && 0.861 & 0.847 & 0.849 & 0.701 & 0.909 && 0.848  \\
&& D-8B && 0.779 & 0.785 & 0.765 & 0.664 & 0.848  && 0.792  \\
\specialrule{.1em}{.075em}{.075em}
\end{tabular}
}
\caption{Complete AUROC performance for baseline uncertainty algorithms and our methods on selected datasets. L-7B, L-13B, F-7B, M-7B, Q-14B and D-8B represent LLaMA2-7B, LLaMA2-13B, Falcon-7B, Mistral-7B-v0.1, Qwen2.5-14B and DeepSeek-R1-Distill-LLaMA-8B, which controll comparisons of 1) parameter-efficient architectures vs. standard transformers, 2) sparse vs. dense activation patterns, and 3) native training vs. distillation approaches. The methods are as described in \Cref{baseline}.
\label{tab:4}
}
\vspace{-3mm}
\end{table*}

This section presents the complete experimental results. 

\textbf{Generalization Study.} \Cref{tab:4} summarizes the AUROC scores comparing our method with baseline uncertainty algorithms across four NLG tasks and six LLM variants, demonstrating three key comparisons: 1) parameter-efficient architectures (e.g., DeepSeek-R1-Distill-LLaMA-8B) versus standard transformers, 2) sparse (e.g., Falcon-7B) versus dense activation patterns, and 3) native training versus distillation approaches.

Our method achieves competitive or superior performance in all model-dataset combinations. Notably, Shapley method outperforms all baselines on SQuAD (e.g., +2.6\% over discrete semantic entropy for LLaMA2-13B) and BioASQ (e.g., +3.4\% over semantic entropy for Mistral-7B), suggesting strong generalization to both general-domain and biomedical QA tasks. The distillation-based DeepSeek-R1 model shows particular compatibility with our method, attaining state-of-the-art AUROC on SQuAD.

Two noteworthy observations emerge: First, while embedding regression performs exceptionally well on SVAMP with Qwen2.5-14B, Shapley maintains robustness across all models for this arithmetic reasoning task. Second, the gap between Shapley and baselines widens for smaller models (e.g., +7.9\% over naive entropy for Falcon-7B on NQ-Open), indicating our method's effectiveness in low-resource scenarios. These results validate the importance of model-agnostic uncertainty quantification.

\label{app:hyparam}

\textbf{Kernel Hyperparameter.} We select the hyperparameter $\beta$ of the kernel function in our method by finding maximum mean AUROC through two rounds of systematic grid search. In first round, we initially constrain the search space to the original $\beta\in (0,1]$ with 0.1 increments. Based on the result of the first round, the search range is narrowed to $\beta\in [0.4, 0.6]$ with 0.02 increments. As shown in \Cref{tab:5}, the default hyperparameter $\beta$ should be set to 0.5.

\begin{table*}[t!]
\centering
\vspace{0mm}
\renewcommand\arraystretch{1.35}
\scalebox{0.8}{
\begin{tabular}{ccccccccccc}
\specialrule{.1em}{.075em}{.075em} 
$\beta$ & 0.1 & 0.2 & 0.3 & 0.4 & 0.5 & 0.6 & 0.7 & 0.8 & 0.9 & 1.0 \\
AUROC & 0.7886 & 0.7888 & 0.7888 & 0.7889 & \textbf{0.7891} & 0.7887 & 0.7883 & 0.7879 & 0.7874 & 0.7871 \\
\cline{1-11}
$\beta$ & 0.42 & 0.44 & 0.46 & 0.48 & 0.5 & 0.52 & 0.54 & 0.56 & 0.58 & 0.6 \\
AUROC & 0.78892 & 0.78894 & 0.78899 & 0.78905 & \textbf{0.78910} & 0.78902 & 0.78897 & 0.78891 & 0.78880 & 0.78866 \\
\specialrule{.1em}{.075em}{.075em}
\end{tabular}
}
\caption{AUROC performance for kernels with different $\beta$ values. LLaMA2-7B and TriviaQA are utilized as backbone dataset and LLM.
}
\label{tab:5}
\vspace{-3mm}
\end{table*}


\end{document}